\newtheorem{theorem} {Theorem}
\newtheorem{lemma} {Lemma}
\newtheorem{definition} {Definition}
\newtheorem{corollary} {Corollary}
\newtheorem{assumption} {Assumption}
\def\v{{\mathbf{v}}}
\def\y{{\mathbf{y}}}
\def\X{{\mathbf{X}}}
\def\Y{{\mathbf{Y}}}
\def\A{{\mathbf{A}}}
\def\M{{\mathbf{M}}}
\def\N{{\mathbf{N}}}
\def\I{{\mathbf{I}}}
\def\V{{\mathbf{V}}}
\def\Q{{\mathbf{Q}}}
\DeclareMathOperator*{\argmin}{arg\,min}
\newcommand{\mA}{\mathcal{A}}
\newcommand{\mbS}{\mathbb{S}}
\newcommand{\mD}{\mathcal{D}}
\newcommand{\E}{\mathbb{E}}
\newcommand{\nnz}{\textrm{nnz}}
\newcommand{\trace}{\textrm{Tr}}
\newcommand{\rank}{\textrm{rank}}
\newcommand{\reals}{\mathbb{R}}
\title{Fast Stochastic Algorithms for Low-rank and Nonsmooth Matrix Problems}
\author{Dan Garber \\ {\small Technion - Israel Institute of Technology}\\ {\small \texttt{dangar@technion.ac.il}}
\and
Atara Kaplan \\  {\small Technion - Israel Institute of Technology} \\ {\small  \texttt{ataragold@technion.ac.il}}
}
\date{}
\begin{document} 

\maketitle

\begin{abstract}
Composite convex optimization problems which include both a nonsmooth term and a low-rank promoting term have important applications in machine learning and signal processing, such as when one wishes to recover an unknown matrix that is  simultaneously low-rank and sparse. However, such problems are highly challenging to solve in large-scale: the low-rank promoting term prohibits efficient implementations of proximal methods for composite optimization and even simple subgradient methods. On the other hand, methods which are tailored for low-rank optimization, such as conditional gradient-type methods, which are often applied to a smooth approximation of the nonsmooth objective, are slow since their runtime scales with both the large Lipshitz parameter of the smoothed gradient vector and with $1/\epsilon$.


In this paper we develop efficient algorithms for \textit{stochastic} optimization of a strongly-convex objective which includes both a nonsmooth term and a low-rank promoting term. In particular, to the best of our knowledge, we present the first algorithm that enjoys all following critical properties for large-scale problems: i) (nearly) optimal sample complexity, ii) each iteration requires only a single \textit{low-rank} SVD computation, and iii) overall number of thin-SVD computations scales only with $\log{1/\epsilon}$ (as opposed to $\textrm{poly}(1/\epsilon)$ in previous methods). We also give an algorithm for the closely-related finite-sum setting. At the heart  of our results lie a novel combination of a variance-reduction technique and the use of a \textit{weak-proximal oracle} which is key to obtaining all above three properties simultaneously.
We empirically demonstrate our results on the problem of recovering a simultaneously low-rank and sparse matrix.
Finally, while our main motivation comes from low-rank matrix optimization problems, our results apply in a much wider setting, namely when a \textit{weak proximal oracle} can be implemented much more efficiently than the standard exact proximal oracle.


\end{abstract}

\section{Introduction}

Our paper is strongly motivated by low-rank and non-smooth matrix optimization problems which are quite common in machine learning and signal processing applications. These include tasks such as \textit{low-rank and sparse covariance matrix estimation}, \textit{graph denoising} and \textit{link prediction} \cite{Richard12}, \textit{analysis of social networks} \cite{Zhou13}, and \textit{subspace clustering} \cite{Wang13}, to name a few.


Such optimization problems often fit the following very general optimization model:
\begin{equation} \label{eq:nonSmoothModel}
\min_{\X\in \mathbb{V}}{f(\X):=G(\X)+R^{\textrm{NS}}(\X)+h(\X)},
\end{equation}

where $\mathbb{V}$ is a finite linear space over the reals, $G(\cdot)$ is convex and smooth, $R^{\textrm{NS}}(\cdot)$ is convex and (generally) nonsmooth, and $h(\cdot)$ is convex and proximal-friendly (e.g., it is an indicator function for a convex set or a convex regularizer). Motivated by large-scale machine learning settings, we further assume $G(\cdot)$ is stochastic, i.e., $G(\X) = \E_{g\sim\mD}[g(\X)]$, where $\mD$ is a distribution over convex and smooth functions, and either given by a sampling oracle (stochastic setting), or admits a finite support and given explicitly (finite-sum setting). Finally, we assume $f(\cdot)$ is strongly-convex (either due to strong convexity of $G(\cdot)$ or $R^{\textrm{NS}}(\cdot)$). For instance the simultaneously low-rank and sparse covariance estimation problem \cite{Richard12} can be written as:

\begin{equation} \label{eq:matrixEstimation_intro}
\min_{tr(\X)\le\tau, \ \X\succeq 0}{\frac{1}{2}\Vert \X-\M\Vert_{F}^{2}+\lambda\Vert \X\Vert_{1}},
\end{equation}
where $\M = \Y\Y^{\top} + \N$ is a noisy observation of some low-rank and sparse covariance matrix $\Y\Y^{\top}$. Here, $\mathbb{V} = \mbS_n$ (space of $n\times n$ real symmetric matrices), $G(\X) =\frac{1}{2}\Vert{\X-\M}\Vert_F^2$ (which is deterministic in this simple example), $R^{\textrm{NS}}(\X) = \lambda\Vert{\X}\Vert_1$, and $h(\X)$ is an indicator function for the trace-bounded positive semidefinite cone (which both constraints the solution to be positive semidefinite and promotes low-rank). A closely related problem to \eqref{eq:matrixEstimation_intro} to which all of the following discussions apply, is when $\X\in\mathbb{V}=\reals^{m\times n}$ is not constrained to be positive semidefinite (or even symmetric), and a low-rank solution is encouraged by constraining $\X$ via a nuclear norm constraint $\Vert{\X}\Vert_*\leq \tau$, where $\Vert\cdot\Vert_*$ is the $\ell_1$ norm applied to the vector of singular values.

In Table \ref{table:l1_importance} we provide a very simple numerical demonstration of the applicability of Problem \eqref{eq:matrixEstimation_intro} to low-rank and sparse estimation, which exhibits the importance of combining both low-rank and entry-wise sparsity promoting terms (as opposed to methods that only promote low-rank).

\begin{table*}[!htb]\renewcommand{\arraystretch}{1.3}
{\small
\begin{center}
  \begin{tabular}{| c | l | c | c | c |} \hline 
    Noise & & Low Rank & Projection & Low Rank \\
    Level ($c$) & &(1-SVD) & (eq. (\ref{eq:matrixEstimation_intro}) w. $\lambda =0$) & \& Sparse (eq. (\ref{eq:matrixEstimation_intro})) \\ \hline
    \multirow{3}{*}{0.5}
    & $\Vert \X^*-\y\y^{\top}\Vert_F^2/\Vert \y\y^{\top}\Vert_F^2$ & $0.0009$ & $0.0027$ & $\mathbf{0.0003}$  \\ \cline{2-5}
    & $\nnz(\X^*)/\nnz(\y\y^{\top})$ & $105.625$ & $105.625$ & $\mathbf{1}$  \\ \cline{2-5}
    & $\rank(\X^*)$ & $1$& $2$ & $2.5$   \\ \hline
    \multirow{3}{*}{5}
    & $\Vert \X^*-\y\y^{\top}\Vert_F^2/\Vert \y\y^{\top}\Vert_F^2$ & $1.0991$  & $0.3794$ & $\mathbf{0.01}$ \\ \cline{2-5}
    & $\nnz(\X^*)/\nnz(\y\y^{\top})$ & $114.8056$ & $114.8056$ & $\mathbf{1.0017}$  \\ \cline{2-5}
    & $\rank(\X^*)$ & $1$&  $1.96$ & $2.54$  \\ \hline
  \end{tabular}
  \caption{Numerical example (showing the signal-recovery error, relative sparsity and rank of solution $\X^*$) for estimating a sparse rank-one matrix $\Y\Y^{\top}=\y\y^{\top}$ from the noisy observation $\M = \y\y^{\top}+ \frac{c}{2}(\N+\N^{\top})$, where $\N\sim\mathcal{N}(0,\I_n)$. Each entry $\y_i$ is zero w.p. $0.9$ and $U\{1,\dots,10\}$ w.p. $0.1$. The dimension is $n=50$.  Results are averages of 50 i.i.d. experiments, and $c\in\{0.5,5\}$ (magnitude of noise). For method \eqref{eq:matrixEstimation_intro} $\lambda$ is chosen via experimentation. 
  }\label{table:l1_importance}
\end{center}
}
\vskip -0.2in
\end{table*}\renewcommand{\arraystretch}{1} 

The general model \eqref{eq:nonSmoothModel} is known to be a very difficult optimization problem to solve in large scale, already in the specific setting of Problem \eqref{eq:matrixEstimation_intro}. 
In particular, many of the traditional first-order convex optimization methods used for solving non-smooth optimization problems are not efficiently applicable to it. For instance, proximal methods for composite optimization, such as the celebrated FISTA algorithm \cite{FISTA}, which in many cases are very efficient, do not admit efficient implementations for composite problems which include both a non-smooth term and a low-rank promoting term. When applied to Problem \eqref{eq:matrixEstimation_intro}, each iteration of FISTA will require to solve a problem of the same form as the original problem, and hence is inefficient. 
Another type of well known first-order methods that are applicable to nonsmooth problems are deterministic/stochastic subgradient/mirror-descent methods \cite{Nesterov13, Bubeck15}. However these methods are also inefficient for problems such as \eqref{eq:matrixEstimation_intro}, since each iteration requires projecting a point onto the feasible set, which for problems such as \eqref{eq:matrixEstimation_intro}, requires a full-rank SVD computation on each iteration, which is computationally-prohibitive for large-scale problems.

Another type of methods, which are often suitable for large-scale low-rank matrix optimization problems, and have been studied extensively in this context in recent years, are Conditional Gradient-type methods (aka Frank Wolfe-type methods), see for instance \cite{Jaggi13,Garber16,k-SVD,Hazan_finiteSum,Garber18,Cevher18,Simon18,Simon17,mu2016scalable}. These type of algorithms, when applied to optimization over a nuclear-norm ball or over the trace-bounded positive semidefinite cone (as in Problem \eqref{eq:matrixEstimation_intro}), avoid expensive full-rank SVD computations, and only compute a single leading singular vector pair on each iteration (i.e., rank-one SVD), and hence are much more scalable. However,  Conditional Gradient methods can usually be applied only to smooth problems, and so, the non-smooth term $R^{\textrm{NS}}(\X)$ is often replaced with a smooth approximation $R(\X)$. A general theory and framework for generating such smooth approximation (i.e., replacing the non-smooth term with a smooth function that is point-wise close to the original), often referred to as \textit{smoothing}, is described in \cite{smoothing}. Unfortunately, smoothing a function often results in a large Lipschitz constant of the gradient vector of the smoothed function. For example, the smooth approximation of the $\ell_1$ norm is via the well known Huber function for which the Lipschitz constant of the gradient often scales like $\dim(\mathbb{V})/\varepsilon$, where $\varepsilon$ is target accuracy to which the problem needs to be solved. Since the convergence rate of smooth optimization algorithms such as conditional gradient-type methods discussed above often scales with $\beta{}D^2/\epsilon$, where $\beta$ is
the Lipschitz parameter of the gradient and $D$ is the distance of the initial point to an optimal solution,
these methods are often not scalable for nonsmooth objectives such as Problem \eqref{eq:matrixEstimation_intro} and the general model \eqref{eq:nonSmoothModel} (even after smoothing them), since typically all three parameters $1/\varepsilon,D,\beta$ can be quite large. In particular, we note that for strongly-convex functions, it is possible to obtain (via other types of first-order methods) rates that depend only logarithmically on $1/\epsilon, D$.

Another issue with conditional gradient methods is that, as opposed to projected subgradient methods, their analysis does not naturally extend to handle stochastic objectives (recall that, motivated by machine learning settings, in the general model \eqref{eq:nonSmoothModel} we assume $G(\cdot)$ is stochastic). In particular, a straightforward variant of the method for stochastic objectives results in a highly suboptimal sample complexity \cite{Hazan_finiteSum}. In a recent related work \cite{Lan}, the authors consider a variant of the conditional gradient method for solving stochastic optimization problems that cleverly combines the conditional gradient method with Nesterov's accelerated method and stochastic sampling to obtain an algorithm for smooth stochastic convex optimization that, in the context of low-rank matrix optimization problems, i) requires only 1-SVD computation on each iteration (as in the standard conditional gradient method) and ii) enjoys (nearly) optimal sample complexity (both in the strongly convex case and non-strongly convex case). In a recent work \cite{Hazan_finiteSum}, the technique of \cite{Lan} was extended to the finite-sum stochastic setting and combined with a popular variance reduction technique \cite{SVRG}, resulting in a conditional gradient-type method for smooth and strongly-convex finite-sum optimization that i) requires only 1-SVD computation on each iteration, and ii) enjoys a gradient-oracle complexity of the same flavor as usually obtained via variance-reduction methods \cite{SVRG}, greatly improving over naive applications of conditional gradient methods which do not apply variance reduction. Unfortunately, both results \cite{Lan,Hazan_finiteSum}, while greatly improving the first-order oracle complexity of previous conditional-gradient methods, still require an overall number of 1-SVD computations that scales like $\beta{}D/\epsilon$. Hence, when applied to smooth approximations of nonsmooth problems such as Problems \eqref{eq:matrixEstimation_intro}, \eqref{eq:nonSmoothModel}, the overall very large number of thin-SVD computations needed greatly limits the applicability of these methods.

The limitations of previous methods in tackling large-scale low-rank and nonsmooth matrix optimization problems naturally leads us to the following question. \\ \\
\noindent\textit{
In the context of low-rank and nonsmooth matrix optimization, is it possible to combine all following three key properties for solving large-scale instances of Model \eqref{eq:nonSmoothModel} into a single algorithm?
\begin{enumerate}
\item (nearly) optimal sample complexity,
\item use of only low-rank SVD computations,
\item overall number of low-rank SVD computations scales with $\log(1/\varepsilon)$ (not $\textrm{poly}(1/\epsilon)$ as in previous methods).
\end{enumerate}
}
In this paper we answer this question in the affirmative. To better discuss our results we now fully formalize the considered model and assumptions.

We consider the following general model:
\begin{equation} \label{eq:generalModel}
\min_{\X\in \mathbb{V}}{f(\X):=G(\X)+R(\X)+h(\X)},
\end{equation}
where $\mathbb{V}$ is a finite linear space over the reals equipped with an inner product $\langle\cdot,\cdot\rangle$. Throughout the paper we let $\Vert\cdot\Vert$ denote the norm induced by the inner product.
%

Throughout the paper we consider the following assumptions for model (\ref{eq:generalModel}).
\begin{assumption} \label{Ass1}
\begin{itemize} 
\item $G$ is stochastic, i.e., $G(\X)=\mathbb{E}_{g\sim \mathcal{D}}[g(\X)]$, where $\mathcal{D}$ is a distribution over functions $g:\mathbb{V}\rightarrow\mathbb{R}$, given by a sampling oracle. $G$ is differentiable, and for all $g\in supp(\mathcal{D})$, $g$ is $\beta_{G}$-smooth, and there exists $\sigma\ge0$ such that $\sigma \ge \sup\limits_{\X\in dom(h)}\sqrt{\mathbb{E}[\Vert\nabla G(\X)-\nabla g(\X)\Vert^{2}]}$.
\item $R:\mathbb{V}\rightarrow (-\infty,\infty]$ is deterministic, $\beta_{R}$-smooth, and convex,
\item $G+R$ is $\alpha$-strongly convex,
\item $h:\mathbb{V}\rightarrow (-\infty,\infty]$ is deterministic, non-smooth, proper, lower semicontinuous and convex.
\end{itemize}
\end{assumption} 

For simplicity we define $\beta:=\beta_G+\beta_R$. As discussed above, $R(\cdot)$ can be thought of as a smooth approximation of some nonsmooth term $R^{\textrm{NS}}(\cdot)$ (hence, we generally expect that $\beta_R >> \beta_G$), and $h(\cdot)$ can be thought of as either an indicator function for a convex set (e.g., a nuclear-norm ball) or a convex regularizer.

A quick summary of our results and comparison to previous conditional gradient-type methods for solving Model \eqref{eq:generalModel} in case $h(\cdot)$ is either an indicator for a nuclear norm ball of radius $\tau$ or the set of all positive semidefinite matrices with trace at most $\tau$,  is given in Table \ref{table:algorithms}.

Our algorithm and novel complexity bounds are based on a combination of  the variance reduction technique introduced in \cite{SVRG} and the use of, what we refer to in this work as, a \textit{weak-proximal oracle} (as opposed to the standard exact proximal oracle used ubiquitously in first-order methods), which was introduced in the context of nuclear-norm-constrained optimization in \cite{k-SVD}, and further generalized in \cite{Garber18}. In the context of low-rank matrix optimization problems, implementation of this weak-proximal oracle requires a SVD computation of rank at most $\rank(\X^*)$ - the rank of the optimal solution $\X^*$, as opposed to an exact proximal oracle that requires in general a full-rank SVD computation. Since for such problems we expect that $\rank(\X^*)$ is much smaller than the dimension, and since the runtime of low-rank SVD computations (when carried out via fast iterative methods such as variants of the subspace iteration method or Lanczos-type algorithms) scales nicely with both the target rank and sparsity of gradients\footnote{see for instance discussions in \cite{k-SVD}.}, for such problems the weak-proximal oracle admits a much more efficient implementation than the standard proximal oracle. 

While both of these algorithmic ingredients are previously known and studied, it is their particular combination that, quite surprisingly, proves to be key to obtaining all three complexity bounds listed in our proposed question, simultaneously. In particular, it is important to note that while the use of a weak proximal oracle, as we define precisely in the sequel, suffices to obtain an algorithm that uses overall only $O(\log(1/\epsilon$) \textit{low-rank} SVD computations (currently treating for simplicity all other parameters as constants), to the best of our knowledge it does not suffice in order to also obtain (nearly) optimal sample complexity. The reason, at a high-level (see a more detailed discussion in the sequel), is that the weak-proximal oracle is strong enough to guarantee decrease of the loss function on each iteration (in expectation), but does not give a stronger type of guarantee, which holds for the exact proximal oracle, that is crucial for obtaining optimal sample complexity with algorithms such as Stochastic Gradient Descent \cite{Bubeck15} and the conditional gradient-type method of \cite{Lan} (that indeed rely on exact, or nearly exact, proximal computations). It turns out that the use of a variance reduction technique (such as  \cite{SVRG}) is key to bypassing this obstacle and obtaining also (near) optimal sample complexity, on top of the low SVD complexity. We also give a variant of our algorithm to the finite-sum setting that obtains similar improvements.

\begin{table*}[!htb]\renewcommand{\arraystretch}{1.4}
{\small
\begin{center}
  \begin{tabular}{| l | c | c | c | c |} \hline
    Algorithm  & $\#$Exact & $\#$Stochastic & SVD& $\#$SVD \\
    &  Gradients &  Gradients &  rank & Computations  \\ \hline 
     \multicolumn{5}{|c|}{$\downarrow$ Stochastic Setting $\downarrow$}\\ \hline
      Stochastic Cond. Grad. \cite{Hazan_finiteSum}  & $0$ & $\frac{\sigma^2\beta \tau^4}{\varepsilon^3}$ & $1$& $\frac{\beta \tau^{2}}{\varepsilon}$  \\ \hline
     CGS \cite{Lan}  & $0$ & $\frac{\sigma^2}{\alpha\varepsilon}$ &  $1$ & $\frac{\beta \tau^{2}}{\varepsilon}$  \\ \hline
    This work (Alg. \ref{alg:stoch})  & $0$ & $\frac{\sigma^2}{\alpha\varepsilon}$ & $\rank(\X^*)$ &$\frac{\beta}{\alpha}\ln{\left(\frac{1}{\varepsilon}\right)}$ \\ \hline
         \multicolumn{5}{|c|}{$\downarrow$ Finite Sum $\downarrow$}\\ \hline
    STORC \cite{Hazan_finiteSum}  & $\ln{\left(\frac{1}{\varepsilon}\right)}$ & $\left(\frac{\beta}{\alpha}\right)^{2}\ln{\left(\frac{1}{\varepsilon}\right)}$ & $1$& $\frac{\beta \tau^2}{\varepsilon}$  \\ \hline
    This work finite sum (Alg. \ref{alg:stochFiniteSum}) & $\ln{\left(\frac{1}{\varepsilon}\right)}$ & $\frac{\beta_G^2\beta}{\alpha^3}\ln{\left(\frac{1}{\varepsilon}\right)}$ &  $\rank(\X^*)$  &$\frac{\beta}{\alpha}\ln{\left(\frac{1}{\varepsilon}\right)}$ \\ \hline
  \end{tabular}
  \caption{Comparison of complexity bound for conditional gradient-type methods for solving Model \eqref{eq:generalModel}. $\X^*$ denotes the unique optimal solution. Table only lists the leading-order terms.
  }\label{table:algorithms}
\end{center}
}
\vskip -0.2in
\end{table*}\renewcommand{\arraystretch}{1}

Finally, while our main motivation comes from low-rank and nomsmooth matrix optimization problems, it is important to note that as captured in our general Model \eqref{eq:generalModel}, our results are applicable in a much wider setting than that of low-rank matrix optimization problems. Our method is suitable especially for stochastic nonsmooth convex problems for which implementing a weak proximal oracle is much more efficient than an exact proximal oracle.

\subsection{Organization of this paper}

The rest of this paper is organized as follows. In Section \ref{sec:algNresults} we present our main algorithm and our two main results (informally). Importantly, we discuss in detail the importance of combining stochastic variance reduction with a weak proximal oracle to obtain our novel complexity bounds. In section \ref{sec:analysis} we describe our main results in full detail and prove them. In Section \ref{sec:smoothing} we describe in detail applications of our results to non-smooth optimization problems including several concrete examples. Finally, in Section \ref{sec:experiments} we present preliminary empirical evidence supporting our theoretical results.

\section{Algorithm and Results}\label{sec:algNresults}

Our algorithm for solving Model \eqref{eq:generalModel}, Algorithm \ref{alg:stoch}, is given below. We now briefly discuss the main two building blocks of the algorithm, namely a variance reduction technique and the use of a weak proximal oracle, and the importance of their combination in achieving the novel complexity bounds.

\subsection{The importance of combining weak proximal updates with variance reduction}
Our use of the variance reduction technique of \cite{SVRG} is quite straightforward as observable in Algorithm \ref{alg:stoch}. Importantly, while \cite{SVRG} applied it to finite-sum optimization, here we apply it to the more general  black-box stochastic setting, and hence the sample-size parameter $k_s$ used for the "snap-shot" gradient $\tilde{\nabla}g(\X_s)$ on epoch $s$ grows from epoch to epoch. This modification of the technique is along the lines of \cite{Frostig15}.

The weak proximal oracle strategy is applied in our algorithm as follows. For a step-size $\eta_t$, a composite optimization proximal algorithm, which treats the function $h(\cdot)$ in proximal fashion and the functions $G,R$ via a gradient oracle, will compute on each iteration an update of the form
\begin{eqnarray}\label{eq:prox}
\V_t \gets\arg\min_{\V\in\mathbb{V}}\left\{ \psi_t(\V):= \Vert \V-\X_{s,t}+\frac{1}{2\beta\eta_t}(\hat{\nabla}g(\X_{s,t})+\nabla R(\X_{s,t}))\Vert^{2}+\frac{1}{\beta\eta_t}h(\V)\right\}.
\end{eqnarray}
For instance, if $\mathbb{V} = \reals^{m\times n}$ and $h(\cdot)$ is an indicator function for the nuclear-norm ball $\{\X\in\reals^{m\times n} ~ |~\Vert{\X}\Vert_*\leq \tau\}$, then computing $\V_t$ in Eq. \eqref{eq:prox} amounts to Euclidean projection of the matrix  $\A_t = \X_{s,t}-\frac{1}{2\beta\eta_t}(\hat{\nabla}g(\X_{s,t})-\nabla R(\X_{s,t}))$ onto the nuclear-norm ball of radius $\tau$. This projection is carried out by computing a full-rank SVD of $\A_t$ and projecting the singular values onto the $\tau$-scaled simplex. Since a full-rank SVD is required, this operation takes $O(m^2n)$ time (assuming $m\leq n$), which is prohibitive for very large $m,n$.

Our algorithm avoids the computational bottleneck of full-rank SVD computations by only requiring that $\V_t$ satisfies the inequality:
\begin{eqnarray}\label{eq:weakprox}
\psi_t(\V_t) \leq \psi_t(\X^*),
\end{eqnarray}
where $\X^*$ is the (unique) optimal solution to \eqref{eq:generalModel}. We call a procedure for computing such updates -  a weak proximal oracle. 
In the context discussed above, i.e., $h(\cdot)$ is an indicator for the radius-$\tau$ nuclear-norm ball, \eqref{eq:weakprox} can be satisfied simply by projecting the $\rank(\X^*)$-approximation of the matrix $\A_t$ onto the nuclear-norm ball. This only requires to compute the top $\rank(\X^*)$ components in the singular value decomposition of $\A_t$, and thus the runtime scales roughly like $O(\rank(\X^*)\cdot\nnz(\A_t))$ using fast Krylov Subspace methods (e.g., subspace iteration, Lanczos), which results in a much more efficient procedure (see further detailed discussions in \cite{k-SVD,Garber18}).

Unfortunately, the use of weak proximal updates given by Eq. \eqref{eq:weakprox}, as opposed to the standard update in Eq. \eqref{eq:prox}, seems to come with a price.
While the weak-proximal guarantee $\psi_t(\V_t) \leq \psi_t(\X^*)$ is sufficient to retain the convergence rates attainable via descent-type methods, i.e., methods that decrease the function value on each iteration (see for instance \cite{k-SVD,Garber18}), it does not seem strong enough to obtain the rates of non-descent-type methods such as Nesterov's acceleration-based methods \cite{Lan}, and stochastic (sub)gradient methods \cite{Bubeck15}. The analyses of these methods seem to crucially depend on the stronger inequality
\begin{eqnarray}\label{eq:strongIneq}
\Vert{\X^*-\V_t}\Vert^2 \leq \frac{2}{\alpha_t}\left({\psi_t(\X^*)-\psi_t(\V_t)}\right),
\end{eqnarray}
where $\alpha_t$ is the strong-convexity parameter associated with the function $\psi_t(\cdot)$. The inequality \eqref{eq:strongIneq} is obtained only for an optimal minimizer of $\psi_t(\cdot)$ (as given by \eqref{eq:prox}) and not by the weak proximal solution given by \eqref{eq:weakprox}.
It is for this reason that simply combining the use of the weak proximal update \eqref{eq:weakprox} with standard analysis of SGD \cite{Bubeck15} or the Stochastic Conditional Gradient Sliding method \cite{Lan} will not result in optimal sample complexity\footnote{in particular, this suboptimal sample complexity will scale both with $\beta$ - the overall gradient Lipschitz parameter and with $1/\epsilon$, whereas the optimal sample complexity is independent of $\beta$ (which as we recall, is typically quite large in our setting due to $R(\cdot)$).}. 

Perhaps surprisingly, as our analysis shows, it is the combination of the weak proximal updates with the variance reduction technique that allows us to avoid the use of the strong inequality \eqref{eq:strongIneq} and to obtain (nearly) optimal sample complexity using only the weak proximal update guarantee \eqref{eq:weakprox}.

Since in many settings of interest, especially in the context of matrix optimization problems, the computation of $\V_t$ requires some numeric procedure which is prone to accuracy issues, or in cases in which $\X^*$ is not low-rank but only very close to a low-rank matrix (in some norm), we introduce an error-tolerance parameter $\delta$ in the proximal computation step in Algorithm \ref{alg:stoch} which allows to absorb such errors that can be controlled (e.g., by properly tuning precision of the thin-SVD computation).

\begin{algorithm}[H]
	\caption{Stochastic Variance-Reduced Generalized Conditional Gradient for Problem \eqref{eq:generalModel}}
	\label{alg:stoch}
	\begin{algorithmic}
		\STATE  \textbf{Input:} $T$, $\{\eta_t\}_{t=1}^{T-1}\subset[0,1]$, $\{k_t\}_{t=1}^{T-1}, \{k_s\}_{s\geq 1}\subset\mathbb{N}$, $\delta\ge0$.	
		\STATE \textbf{Initialization:} Choose some $\X_{1}\in dom(h)$ . 
		\FOR {$s=1,2,...$}
		\STATE Sample $g^{(1)},...,g^{(k_s)}$ from $\mathcal{D}$.
		\STATE Define $\tilde{\nabla}g(\X_s)=\frac{1}{k_s}\sum_{i=1}^{k_s}\nabla g^{(i)}(\X_s)$ \COMMENT{snap-shot gradient}.
		\STATE $\X_{s,1}=\X_{s}$
		\FOR{$t = 1,2,...,T-1$} 
			\STATE Sample $g^{(1)},...,g^{(k_t)}$ from $\mathcal{D}$.
			\STATE Define $\hat{\nabla}g(\X_{s,t})=\frac{1}{k_t}\sum_{i=1}^{k_t}\left(\nabla g^{(i)}(\X_{s,t})-\left(\nabla g^{(i)}(\X_{s})-\tilde{\nabla}g(\X_s)\right)\right)$.
			\STATE $\V_{t}=\argmin\limits_{\V \in \mathbb{V}} \left\{ \psi_t(\V):= \Vert \V-\X_{s,t}+\frac{1}{2\beta\eta_t}(\hat{\nabla}g(\X_{s,t})+\nabla R(\X_{s,t}))\Vert^{2}+\frac{1}{\beta\eta_t}h(\V)\right\}$ 
			\\ \COMMENT{in fact it suffices that $\psi_t(\V_t) \leq \psi_t(\X^*)+\delta$ for some optimal solution $\X^*$}.
            \STATE $ \X_{s,t+1}=(1-\eta_t)\X_{s,t}+\eta_t \V_t$
         \ENDFOR
         \STATE $\X_{s+1}=\X_{s,T}$
         \ENDFOR
	\end{algorithmic}
\end{algorithm}

\subsection{Outline of main results}

We now present a concise version of our main results, Theorems \ref{thm:stocohasticRates}, \ref{thm:finiteSumRates}. In section \ref{sec:analysis} we provide the complete analysis with all the details and proofs of these theorems. Subsequent results and concrete applications to non-smooth problems follow in Section \ref{sec:smoothing}.


\begin{theorem}[stochastic setting] \label{thm:stocohasticRates}
Assume that \cref{Ass1} holds. There is an explicit choice for the parameters in Algorithm \ref{alg:stoch} for which the total number of epochs (iterations of the outer-loop)  required in order to find an $\varepsilon$-approximated solution in expectation for Problem \eqref{eq:generalModel} is bounded by
\begin{equation*}
O\left(\ln{\left(\frac{1}{\varepsilon}\right)}\right),
\end{equation*}
the total number of calls to the weak proximal oracle is bounded by 
\begin{equation*}
O\left(\frac{\beta}{\alpha}\ln\left(\frac{1}{\varepsilon}\right)\right),
\end{equation*}
and the total number of stochastic gradients sampled is bounded by
\begin{equation*}
O\left(\frac{\sigma^2}{\alpha\varepsilon}+\frac{\beta_G^2\beta}{\alpha^3}\ln{\left(\frac{1}{\varepsilon}\right)}\right).
\end{equation*}
\end{theorem}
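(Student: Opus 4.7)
The plan is to derive a single-iteration descent inequality for $F_{s,t}:=\E[f(\X_{s,t})-f(\X^*)]$, unroll it across one epoch to get a geometric contraction for $F_s:=F_{s,1}$, and then choose the schedules $\{\eta_t\},\{k_t\},\{k_s\},T$ so that the three complexity counts fall out. First, I would combine $\beta$-smoothness of $G+R$ applied to $\X_{s,t+1}=(1-\eta_t)\X_{s,t}+\eta_t\V_t$ with convexity of $h$ to get
$$f(\X_{s,t+1})-f(\X_{s,t}) \leq \eta_t\langle\nabla(G+R)(\X_{s,t}),\V_t-\X_{s,t}\rangle+\tfrac{\beta\eta_t^{2}}{2}\|\V_t-\X_{s,t}\|^{2}+\eta_t(h(\V_t)-h(\X_{s,t})).$$
Then I would write $\nabla G(\X_{s,t})=\hat{\nabla}g(\X_{s,t})-\xi_{s,t}$, expand the weak-prox guarantee $\psi_t(\V_t)\leq \psi_t(\X^*)+\delta$ after multiplying by $\beta\eta_t$, and use $\alpha$-strong convexity of $G+R$ at $\X^*$ to turn $\langle\nabla(G+R)(\X_{s,t}),\X^*-\X_{s,t}\rangle$ into $f(\X^*)-(G+R)(\X_{s,t})-\tfrac{\alpha}{2}\|\X_{s,t}-\X^*\|^{2}$. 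Choosing $\eta_t=\alpha/(4\beta)$ constant across inner iterations makes the coefficient of $\|\X_{s,t}-\X^*\|^2$ on the right-hand side equal to the negative number $-\alpha\eta_t/4$, and the weak-prox expansion simultaneously leaves behind a negative $-\tfrac{\beta\eta_t^{2}}{2}\|\V_t-\X_{s,t}\|^{2}$.

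Second, these two negative quadratic terms are what allow me to absorb the noise $\xi_{s,t}=\hat{\nabla}g(\X_{s,t})-\nabla G(\X_{s,t})$ without using inequality~\eqref{eq:strongIneq}: I split $\langle\xi_{s,t},\V_t-\X^*\rangle=\langle\xi_{s,t},\V_t-\X_{s,t}\rangle+\langle\xi_{s,t},\X_{s,t}-\X^*\rangle$ and apply Young's inequality with constants $\beta\eta_t^{2}$ and $\alpha\eta_t/4$ respectively, so that both quadratic parts are absorbed and only $O(\|\xi_{s,t}\|^{2}/\beta)$ remains. Third, the variance of the SVRG-style estimator is $\E\|\xi_{s,t}\|^{2}\leq\beta_G^{2}\|\X_{s,t}-\X_s\|^{2}/k_t+\sigma^{2}/k_s$ by $\beta_G$-smoothness of each sample and independence of inner and snapshot samples; strong convexity of $f$ lets me replace $\|\X_{s,t}-\X_s\|^{2}$ by a constant multiple of $(F_{s,t}+F_s)/\alpha$. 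This produces a per-iteration recursion
$$F_{s,t+1}\leq\Big(1-\tfrac{\alpha}{4\beta}+\tfrac{c\beta_G^{2}}{\alpha\beta k_t}\Big)F_{s,t}+\tfrac{c\beta_G^{2}}{\alpha\beta k_t}F_s+\tfrac{c\sigma^{2}}{\beta k_s}+O(\beta\eta_t^{2}\delta).$$

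Fourth, I pick $k_t=\Theta(\beta_G^{2}/\alpha^{2})$ large enough so that the $F_{s,t}$ coefficient is $\leq 1-\alpha/(8\beta)$, and I set the epoch length to $T=\Theta(\beta/\alpha)$ inner iterations, which gives $(1-\alpha/(8\beta))^{T}\leq\tfrac{1}{4}$. Unrolling the per-iteration recursion from $t=1$ (with $F_{s,1}=F_s$) then produces an epoch contraction of the form
$$F_{s+1}\leq\tfrac{1}{2}F_s+c'\tfrac{\sigma^{2}}{\alpha k_s}+O\Big(\tfrac{\beta}{\alpha}\cdot\beta\eta_t^{2}\delta\Big).$$
Choosing $k_s$ to grow geometrically, e.g.\ $k_s=\Theta(2^{s}\sigma^{2}/(\alpha F_1))$, forces the noise term to decay at the same rate $2^{-s}$, so $F_s\leq 2^{-s+1}F_1$. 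After $S=O(\log(1/\varepsilon))$ epochs we have $F_S\leq\varepsilon$; the number of weak-prox calls is $S\cdot T=O((\beta/\alpha)\log(1/\varepsilon))$; the snapshot samples sum (geometric series) to $O(2^{S}\sigma^{2}/(\alpha F_1))=O(\sigma^{2}/(\alpha\varepsilon))$; and the inner samples sum to $S\cdot T\cdot k_t=O((\beta\beta_G^{2}/\alpha^{3})\log(1/\varepsilon))$. Taking $\delta$ of order $\varepsilon\alpha/\beta$ makes the $\delta$-contribution negligible.

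The main obstacle is the first step: without the stronger inequality~\eqref{eq:strongIneq}, the weak-prox guarantee does not directly bound $\|\V_t-\X^*\|$, so I cannot argue about $\V_t$ in isolation as standard SGD/CGS analyses do. The fix is the splitting $\V_t-\X^*=(\V_t-\X_{s,t})+(\X_{s,t}-\X^*)$ combined with a careful choice of Young-inequality constants so that both noise inner products are absorbed by the two negative quadratic terms generated by the descent step. Crucially, this absorption only works because variance reduction makes $\E\|\xi_{s,t}\|^{2}$ proportional to $\|\X_{s,t}-\X_s\|^{2}$ plus a controllable snapshot error, rather than the constant $\sigma^{2}$ of vanilla SGD; the latter would fail to decay with $F_{s,t}$ and would reintroduce the $\sigma^{2}\beta/\varepsilon$ sample-complexity term that Table~\ref{table:algorithms} records for previous methods.
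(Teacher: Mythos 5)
Your proposal is correct and follows essentially the same route as the paper's own proof (Lemma~\ref{lemma:psi_error}, Corollary~\ref{corr:errDec}, Lemma~\ref{lemma:bound}, Theorem~\ref{thm:1}): descent lemma on $G+R$ plus convexity of $h$, the weak-proximal swap of $\V_t$ for $\X^*$ through $\psi_t$, strong convexity of $G+R$, the variance-reduction bound $\E\|\xi_{s,t}\|^2\lesssim \frac{\beta_G^2}{\alpha k_t}(F_s+F_{s,t})+\frac{\sigma^2}{k_s}$, constant step $\eta_t=\Theta(\alpha/\beta)$ with epoch length $T=\Theta(\beta/\alpha)$ and $k_t=\Theta(\beta_G^2/\alpha^2)$, geometrically growing $k_s$, and an induction over epochs yielding the identical three counts. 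The only (cosmetic) difference is in how the noise is absorbed: you retain the negative quadratics $-\frac{\alpha\eta_t}{4}\|\X_{s,t}-\X^*\|^2$ and $-\frac{\beta\eta_t^2}{2}\|\V_t-\X_{s,t}\|^2$ and apply Young's inequality to both pieces of $\langle\xi_{s,t},\V_t-\X^*\rangle$, whereas the paper applies Young's inequality only to the $\langle\xi_{s,t},\V_t-\X_{s,t}\rangle$ piece and disposes of $\langle\xi_{s,t},\X^*-\X_{s,t}\rangle$ by conditional expectation (it is zero-mean since $\X_{s,t},\X^*$ are past-measurable), both yielding the same $O(\sigma_{s,t}^2/\beta)$ residual and the same per-iteration recursion.
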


We note that under Assumption \ref{Ass1}, the overall number of calls to a weak proximal oracle to reach $\epsilon$-approximated solution matches the overall number of calls to an \textit{exact} proximal oracle used by the proximal gradient method for smooth and strongly convex optimization. Also, under Assumption \ref{Ass1}, the leading term in the bound on overall number of stochastic gradients is optimal (up to constants).

We also present results for the related finite sum problem. Our algorithm for finite sum is very similar to \cref{alg:stoch} and is brought in section \ref{sec:finiteSum}. 

\begin{theorem}[finite-sum setting] \label{thm:finiteSumRates}
Assume that \cref{Ass1} holds and that $\mD$ is an explicitly given uniform distribution over $n$ functions. There exist an explicit choice for the parameters in Algorithm \ref{alg:stochFiniteSum} for which the total number of epochs required in order to find an $\varepsilon$-approximated solution in expectation for Problem \eqref{eq:generalModel} is bounded by
\begin{equation*}
O\left(\ln{\left(\frac{1}{\varepsilon}\right)}\right),
\end{equation*}
the total number of calls to the weak proximal oracle is bounded by 
\begin{equation*}
O\left(\frac{\beta}{\alpha}\ln{\left(\frac{1}{\varepsilon}\right)}\right),
\end{equation*}
and the total number of gradients computed for any of the $n$ functions in the support of $\mD$ is bounded by:
\begin{equation*}
O\left(\left(n+\frac{\beta_G^2\beta}{\alpha^3}\right)\ln{\left(\frac{1}{\varepsilon}\right)}\right).
\end{equation*}
\end{theorem}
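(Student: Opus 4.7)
\textbf{Plan of proof for Theorem \ref{thm:finiteSumRates}.}

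The plan is to adapt the analysis underlying Theorem \ref{thm:stocohasticRates} by exploiting the finite-sum structure in one crucial place: since $\mD$ is uniform over $n$ explicitly given functions, the ``snap-shot'' gradient $\tilde\nabla g(\X_s)$ at the start of each epoch can be taken to be the \emph{exact} gradient $\nabla G(\X_s) = \frac{1}{n}\sum_{i=1}^n \nabla g^{(i)}(\X_s)$, at a cost of $n$ individual gradient computations per epoch, rather than the $k_s$-sample Monte-Carlo estimate of Algorithm \ref{alg:stoch}. This eliminates altogether the snap-shot variance term that, in the stochastic setting, was responsible for the $\sigma^2/(\alpha\varepsilon)$ sample-complexity contribution; what survives is only the variance of the inner-loop control-variate estimator $\hat\nabla g(\X_{s,t}) = \frac{1}{k_t}\sum_{i=1}^{k_t}\bigl(\nabla g^{(i)}(\X_{s,t})-\nabla g^{(i)}(\X_s)\bigr) + \nabla G(\X_s)$. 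This is Algorithm \ref{alg:stochFiniteSum}, which I would present as the straightforward modification just described.

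The analytical core is a per-epoch geometric-decrease statement of the form
\begin{equation*}
\E\bigl[f(\X_{s+1}) - f(\X^*)\bigr] \;\le\; \rho \cdot \bigl(f(\X_s) - f(\X^*)\bigr),
\end{equation*}
for some constant $\rho<1$, which iterated over $O(\log(1/\varepsilon))$ epochs yields the epoch bound. To obtain this, I would reuse the inner-loop recursion from the proof of Theorem \ref{thm:stocohasticRates}: combining $\beta$-smoothness of $G+R$, the weak-proximal guarantee $\psi_t(\V_t)\le\psi_t(\X^*)$, and the definition of $\X_{s,t+1}=(1-\eta_t)\X_{s,t}+\eta_t\V_t$, one derives a bound of the shape
\begin{equation*}
\E[f(\X_{s,t+1})-f(\X^*)] \;\le\; (1-\alpha\eta_t)\,\E[f(\X_{s,t})-f(\X^*)] + \eta_t\,\E\bigl[\langle \nabla G(\X_{s,t})-\hat\nabla g(\X_{s,t}),\V_t-\X^*\rangle\bigr] + O(\eta_t^2\beta\,\E\Vert\V_t-\X_{s,t}\Vert^2).
\end{equation*}
The standard SVRG variance identity, together with the $\beta_G$-smoothness of each $g^{(i)}$, gives
\begin{equation*}
\E\Vert \hat\nabla g(\X_{s,t})-\nabla G(\X_{s,t})\Vert^{2} \;\le\; \frac{\beta_G^2}{k_t}\,\E\Vert \X_{s,t}-\X_s\Vert^{2},
\end{equation*}
which by $\alpha$-strong convexity of $f$ is controlled by $(\beta_G^2/(\alpha k_t))\cdot\bigl(\E[f(\X_{s,t})]+\E[f(\X_s)]-2f(\X^*)\bigr)$. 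Plugging this in, using Cauchy–Schwarz and Young's inequality to absorb the cross term into an $O(\eta_t)(f(\X_{s,t})-f(\X^*))$ contribution, and telescoping over the $T-1$ inner iterations yields the desired contraction.

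For the parameter choices I would set $\eta_t\equiv \eta = \Theta(\alpha/\beta)$, take $T-1 = \Theta(\beta/\alpha)$ inner steps per epoch so that $(1-\alpha\eta)^{T-1}\le \tfrac14$, and pick the mini-batch size uniformly as $k_t \equiv k = \Theta(\beta_G^2/\alpha^2)$, large enough so that the variance-induced slack can be absorbed into the geometric contraction while keeping $\rho\le \tfrac12$. Accounting then gives, per epoch: $T-1 = O(\beta/\alpha)$ weak-proximal calls, $n$ gradients for the exact snap-shot, and $(T-1)\cdot k = O(\beta_G^2\beta/\alpha^3)$ gradient samples for the inner loop; multiplying by $O(\log(1/\varepsilon))$ epochs recovers the three stated bounds.

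The main obstacle, exactly as flagged in Section \ref{sec:algNresults}, is that the weak proximal oracle provides only the inequality $\psi_t(\V_t)\le \psi_t(\X^*)+\delta$, not the much stronger quadratic upper bound $\Vert\X^*-\V_t\Vert^2\le (2/\alpha_t)(\psi_t(\X^*)-\psi_t(\V_t))$ on which acceleration-based and standard SGD analyses rely. I expect the delicate step to be showing that the variance-reduction identity above is in fact strong enough to replace this missing inequality: because the SVRG control variate's variance is bounded in terms of the suboptimality at $\X_{s,t}$ and $\X_s$ rather than in terms of $\Vert\V_t-\X^*\Vert$, the cross term can be absorbed directly into the contracting function-value recursion without ever needing a distance bound on $\V_t-\X^*$. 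Propagating the slack $\delta$ additively through this recursion (as in Algorithm \ref{alg:stoch}) and choosing $\delta$ commensurate with the target accuracy completes the argument.
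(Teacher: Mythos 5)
Your proposal follows essentially the same route as the paper's proof: taking the exact snapshot gradient eliminates the $\sigma^2/k_s$ term in the variance bound of Lemma~\ref{lemma:bound}, the remaining SVRG variance is controlled by function-value gaps via $\beta_G$-smoothness and $\alpha$-strong convexity, and the same choices $\eta_t=\Theta(\alpha/\beta)$, $T-1=\Theta(\beta/\alpha)$, $k_t=\Theta(\beta_G^2/\alpha^2)$ give a constant per-epoch contraction and the stated oracle counts. One small slip worth fixing: the per-step contraction delivered by Lemma~\ref{lemma:psi_error} is $(1-\eta_t)$, not $(1-\alpha\eta_t)$; read literally, $(1-\alpha\eta)^{T-1}$ with $\eta=\Theta(\alpha/\beta)$ and $T-1=\Theta(\beta/\alpha)$ would not be bounded by $\tfrac{1}{4}$, whereas the correct factor makes your parameter choices work exactly as in the paper.
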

We see that as is standard in variance-reduced methods for smooth and strongly convex optimization, the overall number of gradients decouples between terms that depend on the smoothness and strong convexity of the objective (e.g., the condition number $\beta/\alpha$), and the overall number of functions $n$.


\section{Analysis} \label{sec:analysis}

In this section we prove Theorem \ref{thm:stocohasticRates}, \ref{thm:finiteSumRates}.

The following lemma bounds the expected decrease in function value after a single iteration of the inner-loop in Algorithm \ref{alg:stoch}.

\begin{lemma}[expected decrease] \label{lemma:psi_error}
Assume that \cref{Ass1} holds. Fix some epoch $s$ of Algorithm \ref{alg:stoch}, and let $\{\X_{s,t}\}_{t=1}^{T+1}$, $\{\V_{t}\}_{t=1}^{T}$ be the iterates generated throughout the epoch, and suppose that $\psi_t(\V_t) \leq \psi_t(\tilde{\X})+\delta$ for some fixed feasible solution $\tilde{\X}$. Then, if $2\beta\eta_t \le \alpha$, we have that 
\begin{equation} \label{eq:t_rate}
\begin{split}
\mathbb{E}[f(\X_{s,t+1})] & \le \left(1-\eta_t\right)\mathbb{E}[f(\X_{s,t})]+\eta_t f(\tilde{\X})+\frac{\sigma_{s,t}^2}{2\beta}+\beta\eta_{t}^{2}\delta,
\end{split}
\end{equation}
where $\sigma_{s,t}=\sqrt{\mathbb{E}[\Vert \nabla G(\X_{s,t})-\hat{\nabla}g(\X_{s,t})\Vert^2]}$.
\end{lemma}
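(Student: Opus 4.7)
The plan is to combine $\beta$-smoothness of $G+R$ applied to the interpolation $\X_{s,t+1}-\X_{s,t}=\eta_t(\V_t-\X_{s,t})$, convexity of $h$ on the same convex combination, the weak-proximal optimality $\psi_t(\V_t)\leq\psi_t(\tilde{\X})+\delta$, and $\alpha$-strong convexity of $G+R$. A single Young's inequality applied to the stochastic-gradient error will produce the variance term $\sigma_{s,t}^2/(2\beta)$, while the step-size condition $2\beta\eta_t\leq\alpha$ is used to kill a leftover quadratic in $\|\tilde{\X}-\X_{s,t}\|$.

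First, from $\beta$-smoothness of $G+R$ together with convexity of $h$ I would obtain
\begin{equation*}
f(\X_{s,t+1}) \leq f(\X_{s,t}) + \eta_t\langle\nabla G+\nabla R,\V_t-\X_{s,t}\rangle + \eta_t(h(\V_t)-h(\X_{s,t})) + \tfrac{\beta\eta_t^2}{2}\|\V_t-\X_{s,t}\|^2.
\end{equation*}
I then replace $\nabla G$ by $\hat{\nabla}g+(\nabla G-\hat{\nabla}g)$ and apply Young's inequality with parameter $\beta\eta_t^2$ to the noise part, producing $\tfrac{1}{2\beta}\|\nabla G-\hat{\nabla}g\|^2 + \tfrac{\beta\eta_t^2}{2}\|\V_t-\X_{s,t}\|^2$. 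After this, the sum $\eta_t\langle\hat{\nabla}g+\nabla R,\V_t-\X_{s,t}\rangle + \eta_t h(\V_t) + \beta\eta_t^2\|\V_t-\X_{s,t}\|^2$ is, by completing the square, exactly $\beta\eta_t^2\psi_t(\V_t) - \tfrac{1}{4\beta}\|\hat{\nabla}g+\nabla R\|^2$.

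Next I would invoke the weak-proximal bound $\beta\eta_t^2\psi_t(\V_t)\leq\beta\eta_t^2\psi_t(\tilde{\X})+\beta\eta_t^2\delta$ and re-expand $\beta\eta_t^2\psi_t(\tilde{\X})$ explicitly: the two $\tfrac{1}{4\beta}\|\hat{\nabla}g+\nabla R\|^2$ pieces cancel, and the only $\hat{\nabla}g$-dependent surviving term is $\eta_t\langle\hat{\nabla}g+\nabla R,\tilde{\X}-\X_{s,t}\rangle$. Splitting $\hat{\nabla}g=\nabla G+(\hat{\nabla}g-\nabla G)$ and applying $\alpha$-strong convexity of $G+R$ to the deterministic piece $\eta_t\langle\nabla G+\nabla R,\tilde{\X}-\X_{s,t}\rangle$ gives
\begin{equation*}
f(\X_{s,t+1}) \leq (1-\eta_t)f(\X_{s,t}) + \eta_t f(\tilde{\X}) + \eta_t\bigl(\beta\eta_t-\tfrac{\alpha}{2}\bigr)\|\tilde{\X}-\X_{s,t}\|^2 + \beta\eta_t^2\delta + \eta_t\langle\hat{\nabla}g-\nabla G,\tilde{\X}-\X_{s,t}\rangle + \tfrac{1}{2\beta}\|\nabla G-\hat{\nabla}g\|^2.
\end{equation*}
Because $2\beta\eta_t\leq\alpha$ the coefficient of $\|\tilde{\X}-\X_{s,t}\|^2$ is nonpositive and is dropped. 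Taking the full expectation, the squared-noise term yields $\sigma_{s,t}^2/(2\beta)$; the cross term integrates to zero by unbiasedness of the variance-reduced gradient estimator with respect to the fresh step-$t$ samples and measurability of $\tilde{\X}-\X_{s,t}$ with respect to the prior history.

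The main obstacle I expect is the algebraic bookkeeping: tuning the Young's parameter to exactly $\beta\eta_t^2$ so that the two $\|\V_t-\X_{s,t}\|^2$ contributions (from smoothness and from Young's) combine into exactly the quadratic part of $\beta\eta_t^2\psi_t(\V_t)$, and arranging the two explicit expansions of $\beta\eta_t^2\psi_t$ so that the $\tfrac{1}{4\beta}\|\hat{\nabla}g+\nabla R\|^2$ pieces cancel cleanly; otherwise residual stochastic or quadratic terms survive and the claimed coefficients $\sigma_{s,t}^2/(2\beta)$ and $\beta\eta_t^2\delta$ are not attained. A secondary subtlety is the unbiasedness argument for the cross term in the variance-reduced setting, which requires using the tower property over the fresh samples drawn at step $t$ (which are independent of the entire prior history).
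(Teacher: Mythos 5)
Your proposal is correct and follows essentially the same route as the paper's proof: descent lemma on $G+R$ plus convexity of $h$ along the interpolation, Young's inequality with exactly the parameter that makes the two quadratic terms sum to $\beta\eta_t^2\Vert \V_t-\X_{s,t}\Vert^2$, completion of the square to identify $\beta\eta_t^2\psi_t(\V_t)-\frac{1}{4\beta}\Vert\hat{\nabla}g+\nabla R\Vert^2$, invocation of the weak-proximal guarantee and re-expansion at $\tilde{\X}$, strong convexity, and the condition $2\beta\eta_t\le\alpha$ to drop the residual quadratic. The only (immaterial) difference is ordering: the paper takes the conditional expectation before applying strong convexity, whereas you apply strong convexity to the deterministic part first and dispose of the explicit cross term $\eta_t\langle\hat{\nabla}g-\nabla G,\tilde{\X}-\X_{s,t}\rangle$ by the same unbiasedness argument the paper uses implicitly.
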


\begin{proof}
Denote $\phi:=G+R$ to be the smooth part of $f$. $\phi$ is $\beta$-smooth and so by the well known Decent Lemma,
\begin{equation*}
\begin{split}
\phi(\X_{s,t+1}) & \le \phi(\X_{s,t})+\langle{\X_{s,t+1}-\X_{s,t},\nabla \phi(\X_{s,t})}\rangle+\frac{\beta}{2}\Vert \X_{s,t+1}-\X_{s,t}\Vert^2
\\ & = \phi(\X_{s,t})+\langle{\X_{s,t+1}-\X_{s,t},\hat{\nabla} g(\X_{s,t})+\nabla R(\X_{s,t})}\rangle \\ &\ \ \ 
+\langle{\X_{s,t+1}-\X_{s,t},\nabla G(\X_{s,t})-\hat{\nabla} g(\X_{s,t})}\rangle+\frac{\beta}{2}\Vert \X_{s,t+1}-\X_{s,t}\Vert^2.
\end{split}
\end{equation*}

Plugging in $\X_{s,t+1}=(1-\eta_t)\X_{s,t}+\eta_t \V_t$, we get
\begin{equation} \label{eq:xNext}
\begin{split}
\phi(\X_{s,t+1}) & \le \phi(\X_{s,t})+\eta_t\langle{\V_{t}-\X_{s,t},\hat{\nabla} g(\X_{s,t})+\nabla R(\X_{s,t})}\rangle \\ &\ \ \ +\eta_t\langle{\V_{t}-\X_{s,t},\nabla G(\X_{s,t})-\hat{\nabla} g(\X_{s,t})}\rangle+\frac{\beta\eta_t^{2}}{2}\Vert \V_{t}-\X_{s,t}\Vert^2.
\end{split}
\end{equation}

In addition, it holds that
\begin{equation*}
\begin{split}
0 & \le \Big\|\frac{1}{\sqrt{\beta\eta_t}}(\nabla G(\X_{s,t})-\hat{\nabla} g(\X_{s,t}))-\sqrt{\beta\eta_t}(\V_t-\X_{s,t})\Big\|^{2}
\\ & = \frac{1}{\beta\eta_t}\Vert\nabla G(\X_{s,t})-\hat{\nabla} g(\X_{s,t})\Vert^{2}-2\langle{\V_t-\X_{s,t},\nabla G(\X_{s,t})-\hat{\nabla} g(\X_{s,t})}\rangle \\ &\ \ \ +\beta\eta_t\Vert \V_t-\X_{s,t}\Vert^{2}.
\end{split}
\end{equation*}

Rearranging we get,
\begin{equation*}
\langle{\V_t-\X_{s,t},\nabla G(\X_{s,t})-\hat{\nabla} g(\X_{s,t})}\rangle \le \frac{1}{2\beta\eta_t}\Vert\nabla G(\X_{s,t})-\hat{\nabla} g(\X_{s,t})\Vert^{2}+\frac{\beta\eta_t}{2}\Vert \V_t-\X_{s,t}\Vert^{2}.
\end{equation*}

Plugging this last inequality into (\ref{eq:xNext}), we get
\begin{equation*}
\begin{split}
\phi(\X_{s,t+1}) & \le \phi(\X_{s,t})+\eta_t\langle{\V_{t}-\X_{s,t},\hat{\nabla} g(\X_{s,t})+\nabla R(\X_{s,t})}\rangle \\ &\ \ \ +\eta_t\left({\frac{1}{2\beta\eta_t}\Vert\nabla G(\X_{s,t})-\hat{\nabla} g(\X_{s,t})\Vert^{2}+\frac{\beta\eta_t}{2}\Vert \V_t-\X_{s,t}\Vert^{2}}\right)+\frac{\beta\eta_t^{2}}{2}\Vert \V_{t}-\X_{s,t}\Vert^2
\\ & = \phi(\X_{s,t})+\eta_t\langle{\V_{t}-\X_{s,t},\hat{\nabla} g(\X_{s,t})+\nabla R(\X_{s,t})}\rangle \\ &\ \ \ +\frac{1}{2\beta}\Vert\nabla G(\X_{s,t})-\hat{\nabla} g(\X_{s,t})\Vert^{2}+\beta\eta_t^{2}\Vert \V_{t}-\X_{s,t}\Vert^2.
\end{split}
\end{equation*}

Using the convexity of $h$ we have that
\begin{equation*}
\begin{split}
h(\X_{s,t+1}) & = h((1-\eta_t)\X_{s,t}+\eta_t \V_t) \le (1-\eta_t)h(\X_{s,t})+\eta_t h(\V_t).
\end{split}
\end{equation*}

Combining the last two inequalities and recalling that $f=\phi+h$ we get
\begin{eqnarray*}
f(\X_{s,t+1}) & \leq & (1-\eta_t)f(\X_{s,t})+\eta_t(\phi(\X_{s,t})+h(\V_t))+\frac{1}{2\beta}\Vert\nabla G(\X_{s,t})-\hat{\nabla} g(\X_{s,t})\Vert^{2} \\ 
&&+\eta_t\langle{\V_{t}-\X_{s,t},\hat{\nabla} g(\X_{s,t})+\nabla R(\X_{s,t})}\rangle+\beta\eta_t^{2}\Vert \V_{t}-\X_{s,t}\Vert^2 \\
&=& (1-\eta_t)f(\X_{s,t}) + \eta_t\phi(\X_{s,t}) + \frac{1}{2\beta}\Vert\nabla G(\X_{s,t})-\hat{\nabla} g(\X_{s,t})\Vert^{2} \\
&&+ \beta\eta_t^2\left({\psi_t(\V_t) - \frac{1}{(2\beta\eta_t)^2}\Vert{\hat{\nabla}g(\X_{s,t}) + \nabla{}R(\X_{s,t})}\Vert^2}\right).
\end{eqnarray*}

By the definition of $\V_t$ and the assumption of the lemma we have
\begin{eqnarray*}
f(\X_{s,t+1}) & \leq & (1-\eta_t)f(\X_{s,t}) + \eta_t\phi(\X_{s,t}) + \frac{1}{2\beta}\Vert\nabla G(\X_{s,t})-\hat{\nabla} g(\X_{s,t})\Vert^{2} \\
&&+ \beta\eta_t^2\left({\psi_t(\tilde{\X}) - \frac{1}{(2\beta\eta_t)^2}\Vert{\hat{\nabla}g(\X_{s,t}) + \nabla{}R(\X_{s,t})}\Vert^2 + \delta}\right) \\
&=&  (1-\eta_t)f(\X_{s,t})+\eta_t(\phi(\X_{s,t})+h(\tilde{\X}))+\frac{1}{2\beta}\Vert\nabla G(\X_{s,t})-\hat{\nabla} g(\X_{s,t})\Vert^{2} \\ 
&&+\eta_t\langle{\tilde{\X}-\X_{s,t},\hat{\nabla} g(\X_{s,t})+\nabla R(\X_{s,t})}\rangle+\beta\eta_t^{2}\Vert \tilde{\X}-\X_{s,t}\Vert^2 + \beta\eta_{t}^{2}\delta.
\end{eqnarray*}

Taking expectation with respect to the randomness in $\hat{\nabla} g(\X_{s,t})$,
\begin{equation*}
\begin{split}
\mathbb{E}_{t}[f(\X_{s,t+1})] & \le (1-\eta_t)f(\X_{s,t})+\eta_t(\phi(\X_{s,t})+h(\tilde{\X}))+\frac{\sigma_{s,t}^2}{2\beta} \\ &\ \ \ +\eta_t\langle{\tilde{\X}-\X_{s,t},\nabla G(\X_{s,t})+\nabla R(\X_{s,t})}\rangle+\beta\eta_t^{2}\Vert \tilde{\X}-\X_{s,t}\Vert^2+\beta\eta_{t}^{2}\delta.
\end{split}
\end{equation*}

Using the $\alpha$-strong convexity of $\phi=G+R$ we get
\begin{equation*}
\begin{split}
\mathbb{E}_{t}[f(\X_{s,t+1})] & \le (1-\eta_t)f(\X_{s,t})+\eta_t(\phi(\X_{s,t})+h(\tilde{\X}))+\frac{\sigma_{s,t}^2}{2\beta} \\ &\ \ \ +\eta_t\left({\phi(\tilde{\X})-\phi(\X_{s,t})-\frac{\alpha}{2}\Vert \tilde{\X}-\X_{s,t}\Vert^2}\right)+\beta\eta_t^{2}\Vert \tilde{\X}-\X_{s,t}\Vert^2+\beta\eta_{t}^{2}\delta
\\ & = (1-\eta_t)f(\X_{s,t})+\eta_t f(\tilde{\X})-\frac{\alpha\eta_t}{2}\Vert \tilde{\X}-\X_{s,t}\Vert^2+\beta\eta_t^{2}\Vert \tilde{\X}-\X_{s,t}\Vert^2 \\ &\ \ \ +\frac{\sigma_{s,t}^2}{2\beta}+\beta\eta_{t}^{2}\delta.
\end{split}
\end{equation*}

Using our assuming that $2\beta\eta_t \le \alpha$ we have that
\begin{equation*}
\begin{split}
\mathbb{E}_{t}[f(\X_{s,t+1})] & \le (1-\eta_t)f(\X_{s,t})+\eta_t f(\tilde{\X})+\frac{\sigma_{s,t}^2}{2\beta}+\beta\eta_{t}^{2}\delta.
\end{split}
\end{equation*}

Taking expectation over both sides w.r.t all randomness, 
we get
\begin{equation}
\begin{split}
\mathbb{E}[f(\X_{s,t+1})] & \le \left(1-\eta_t\right)\mathbb{E}[f(\X_{s,t})]+\eta_t f(\tilde{\X})+\frac{\sigma_{s,t}^2}{2\beta}+\beta\eta_{t}^{2}\delta,
\end{split}
\end{equation}

\end{proof}

\begin{corollary}\label{corr:errDec}
Assume that \cref{Ass1} holds. Fix some epoch $s$ of Algorithm \ref{alg:stoch}, and let $\{\X_{s,t}\}_{t=1}^{T+1}$ be the iterates generated throughout the epoch. Then, if $2\beta\eta_t \le \alpha$, we have that 
\begin{equation} \label{eq:t_rate_full}
\begin{split}
\mathbb{E}[f(\X_{s,t+1})]-f(\X^*) & \le \left(1-\eta_t\right)(\mathbb{E}[f(\X_{s,t})]-f(\X^*))+\frac{\sigma_{s,t}^2}{2\beta}+\beta\eta_{t}^{2}\delta,
\end{split}
\end{equation}
where $\sigma_{s,t}=\sqrt{\mathbb{E}[\Vert \nabla G(\X_{s,t})-\hat{\nabla}g(\X_{s,t})\Vert^2]}$.
\end{corollary}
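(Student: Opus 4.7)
The plan is to derive the corollary as a direct specialization of Lemma \ref{lemma:psi_error}. The hypothesis of the lemma requires a fixed feasible reference point $\tilde{\X}$ together with the weak-proximal inequality $\psi_t(\V_t) \leq \psi_t(\tilde{\X})+\delta$. In the corollary we are not given any such $\tilde{\X}$ explicitly, but the natural choice that matches the statement is $\tilde{\X} = \X^*$, the (unique) minimizer of $f$ over $\mathrm{dom}(h)$. The weak-proximal step in Algorithm \ref{alg:stoch} explicitly states that it suffices to produce $\V_t$ with $\psi_t(\V_t) \leq \psi_t(\X^*)+\delta$, so the hypothesis of Lemma \ref{lemma:psi_error} is satisfied with this choice.

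Having instantiated $\tilde{\X}=\X^*$, the first step is to invoke inequality \eqref{eq:t_rate}, which yields
\begin{equation*}
\mathbb{E}[f(\X_{s,t+1})] \;\leq\; (1-\eta_t)\,\mathbb{E}[f(\X_{s,t})] + \eta_t f(\X^*) + \frac{\sigma_{s,t}^2}{2\beta} + \beta\eta_t^2 \delta.
\end{equation*}
The second and final step is the purely algebraic one: subtract $f(\X^*)$ from both sides and regroup using the identity $\eta_t f(\X^*) - f(\X^*) = -(1-\eta_t)f(\X^*)$. This produces exactly the bound \eqref{eq:t_rate_full} on the suboptimality gap $\mathbb{E}[f(\X_{s,t+1})] - f(\X^*)$, inheriting the noise term $\sigma_{s,t}^2/(2\beta)$ and the proximal-error term $\beta\eta_t^2\delta$ unchanged, and with the contraction factor $(1-\eta_t)$ now multiplying the gap rather than the raw function value.

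There is essentially no hard step here: once one observes that the weak-proximal guarantee in Algorithm \ref{alg:stoch} is stated precisely so that $\X^*$ is a legitimate comparator in Lemma \ref{lemma:psi_error}, the corollary follows by one substitution and one line of arithmetic. The only thing worth emphasizing in the writeup is that the condition $2\beta\eta_t \leq \alpha$ is carried over verbatim from the lemma (it is the same condition needed to absorb the quadratic $\beta\eta_t^2 \|\X^*-\X_{s,t}\|^2$ term into the strong-convexity term $-\tfrac{\alpha\eta_t}{2}\|\X^*-\X_{s,t}\|^2$), and that the definition of $\sigma_{s,t}$ is inherited from the lemma as well. No additional probabilistic manipulation is required because the lemma already states the bound in expectation over all randomness.
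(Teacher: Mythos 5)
Your proof is correct and is exactly the paper's argument: the paper likewise instantiates $\tilde{\X}=\X^*$ in \eqref{eq:t_rate} and subtracts $f(\X^*)$ from both sides. Your added remarks on why $\X^*$ is a legitimate comparator (via the weak-proximal guarantee in Algorithm \ref{alg:stoch}) and on the role of the condition $2\beta\eta_t\le\alpha$ are accurate but not needed beyond what the lemma already supplies.
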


\begin{proof}

By choosing $\tilde{\X}=\X^*$ in (\ref{eq:t_rate}) and subtracting $f(\X^*)$ from both sides, we get the desired result.

\end{proof}

The following lemma bounds the variance the gradient estimator used in any iteration of the inner-loop of Algorithm \ref{alg:stoch}.

\begin{lemma}[variance bound] \label{lemma:bound}
Assume that \cref{Ass1} holds. Fix some epoch $s$ of Algorithm \ref{alg:stoch}, and let $\{\X_{s,t}\}_{t=1}^{T+1}$ be the iterates generated throughout the epoch. Then,
\begin{eqnarray} \label{eq:bound}
\sigma_{s,t}^2 = \mathbb{E}[\Vert \nabla G(\X_{s,t})-\hat{\nabla}g(\X_{s,t})\Vert^2] &\le& \frac{8\beta_G^2}{\alpha k_t}(\mathbb{E}[f(\X_{s})]-f(\X^*)) \nonumber \\
&&+\frac{8\beta_G^2}{\alpha k_t}(\mathbb{E}[f(\X_{s,t})]-f(\X^*))+\frac{2\sigma^2}{k_s}.
\end{eqnarray}
\end{lemma}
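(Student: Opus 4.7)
The plan is to decompose the estimation error $\hat{\nabla}g(\X_{s,t})-\nabla G(\X_{s,t})$ into two zero-mean, (conditionally) independent pieces---the noise coming from the fresh mini-batch at step $t$ and the noise coming from the snap-shot computed at the beginning of the epoch---and then bound each piece separately. Explicitly, I would write
\begin{equation*}
\hat{\nabla}g(\X_{s,t})-\nabla G(\X_{s,t})=\underbrace{\frac{1}{k_t}\sum_{i=1}^{k_t}\Bigl[\bigl(\nabla g^{(i)}(\X_{s,t})-\nabla g^{(i)}(\X_s)\bigr)-\bigl(\nabla G(\X_{s,t})-\nabla G(\X_s)\bigr)\Bigr]}_{=:U}+\underbrace{\bigl(\tilde{\nabla}g(\X_s)-\nabla G(\X_s)\bigr)}_{=:W}.
\end{equation*}
Conditioning on the history up to $\X_{s,t}$ (which includes $\X_s$ and $\tilde{\nabla}g(\X_s)$), the term $W$ is deterministic while the samples defining $U$ are fresh and independent of the snap-shot samples, so $\mathbb{E}[U\mid\text{history}]=0$ and the cross term vanishes. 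Taking total expectation then gives $\sigma_{s,t}^2=\mathbb{E}[\|U\|^2]+\mathbb{E}[\|W\|^2]$ (up to a harmless constant if one prefers to apply $\|a+b\|^2\le 2\|a\|^2+2\|b\|^2$ instead, which is what produces the constants $8$ and $2$ in the stated bound).

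For the snap-shot term I would use Assumption \ref{Ass1}: since $\tilde{\nabla}g(\X_s)$ is the average of $k_s$ i.i.d.\ unbiased gradient samples at $\X_s$, the usual i.i.d.\ variance calculation together with the bound $\mathbb{E}[\|\nabla g(\X)-\nabla G(\X)\|^2]\le\sigma^2$ yields $\mathbb{E}[\|W\|^2]\le\sigma^2/k_s$. For the fresh-sample term $U$, independence of the summands gives $\mathbb{E}[\|U\|^2\mid\text{history}]=k_t^{-1}\mathbb{E}[\|A_1\|^2\mid\text{history}]$, where $A_1$ is the first summand; since $A_1$ is a centred random variable, $\mathbb{E}[\|A_1\|^2]$ is bounded by the second moment of $\nabla g^{(1)}(\X_{s,t})-\nabla g^{(1)}(\X_s)$, which by $\beta_G$-smoothness of each $g^{(i)}$ is at most $\beta_G^2\|\X_{s,t}-\X_s\|^2$.

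The final step is to convert $\|\X_{s,t}-\X_s\|^2$ into function-value suboptimality gaps. I would apply the triangle inequality $\|\X_{s,t}-\X_s\|^2\le 2\|\X_{s,t}-\X^*\|^2+2\|\X_s-\X^*\|^2$, and then use the fact that $f=G+R+h$ is $\alpha$-strongly convex (since $G+R$ is $\alpha$-strongly convex and $h$ is convex), so that $\|\X-\X^*\|^2\le\tfrac{2}{\alpha}(f(\X)-f(\X^*))$ for any feasible $\X$. This yields
\begin{equation*}
\mathbb{E}[\|\X_{s,t}-\X_s\|^2]\le\frac{4}{\alpha}\bigl(\mathbb{E}[f(\X_{s,t})]-f(\X^*)\bigr)+\frac{4}{\alpha}\bigl(\mathbb{E}[f(\X_s)]-f(\X^*)\bigr),
\end{equation*}
and plugging this back together with the snap-shot bound produces exactly \eqref{eq:bound} (with room to spare, which accounts for the constants $8$ and $2$ rather than $4$ and $1$).

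The only slightly delicate point---and the main step to get right---is the conditioning argument: one must be careful that the mini-batch at iteration $t$ is drawn independently of the snap-shot batch and of all earlier iterates, so that $U$ is conditionally mean-zero with independent summands, and the cross term between $U$ and $W$ truly vanishes. Beyond that bookkeeping, the proof is a routine combination of smoothness of $g^{(i)}$, strong convexity of $f$, and the $\sigma^2$ variance assumption on $\mathcal{D}$.
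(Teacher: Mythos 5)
Your proposal is correct and follows essentially the same route as the paper's proof: the same two-term decomposition with the $\Vert a+b\Vert^2\le 2\Vert a\Vert^2+2\Vert b\Vert^2$ bound, the i.i.d.\ variance reduction by $1/k_t$ and $1/k_s$, variance bounded by second moment, $\beta_G$-smoothness, and $\alpha$-strong convexity of $f$. The only (immaterial) difference is that you apply smoothness to $\Vert\X_{s,t}-\X_s\Vert$ first and then split through $\X^*$ in point space, whereas the paper splits the gradient difference through $\nabla g^{(1)}(\X^*)$ first and then applies smoothness to each piece; both give identical constants.
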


\begin{proof}
Fix some epoch $s$ and iteration $t$ of the inner loop.
Since for all $1\le i<j\le k_t$, $\nabla g^{(i)}(\X)$ and $\nabla g^{(j)}(\X)$ are i.i.d. random variables, and $\mathbb{E}_i[\nabla g^{(i)}(\X)]=\mathbb{E}_j[\nabla g^{(j)}(\X)]=\nabla G(\X)$,
\begin{equation} \label{eq:hatBound}
\begin{split}
& \mathbb{E}\Big[\Big\Vert \frac{1}{k_t}\sum_{i=1}^{k_t}\left(\nabla g^{(i)}(\X_s)-\nabla g^{(i)}(\X_{s,t})\right)-\big(\nabla G(\X_s)-\nabla G(\X_{s,t})\big)\Big\Vert^2\Big] 
\\ & = \frac{1}{k_t}\mathbb{E}\Big[\Big\Vert \nabla g^{(1)}(\X_s)-\nabla g^{(1)}(\X_{s,t})-\big(\nabla G(\X_s)-\nabla G(\X_{s,t})\big)\Big\Vert^2\Big].
\end{split}
\end{equation}

In the same way,
\begin{equation} \label{eq:tildeBound}
\begin{split}
\mathbb{E}[\Vert \nabla G(\X_s)-\tilde{\nabla}g(\X_s)\Vert^2] 
 = \frac{1}{k_t}\mathbb{E}[\Vert \nabla G(\X_s)-\nabla g^{(1)}(\X_s)\Vert^2] \le \frac{\sigma^2}{k_s}.
\end{split}
\end{equation}

By the definition of $\hat{\nabla}g(\X)$ we have that
\begin{align*}
\mathbb{E}[\Vert \nabla G(\X_{s,t})-\hat{\nabla}g(\X_{s,t})\Vert^2]  &= \mathbb{E}\Big[\Big\Vert \nabla G(\X_{s,t})-\nabla G(\X_s)-\frac{1}{k_t}\sum_{i=1}^{k_t}\hat{\nabla}g^{(i)}(\X_{s,t}) \\
&+\frac{1}{k_t}\sum_{i=1}^{k_t}\hat{\nabla}g^{(i)}(\X_s)-\tilde{\nabla}g(\X_s)+\nabla G(\X_s)\Big\Vert^2\Big]\\
& \le  2\mathbb{E}\Big[\Big\Vert \frac{1}{k_t}\sum_{i=1}^{k_t}\left(\hat{\nabla}g^{(i)}(\X_s)-\hat{\nabla}g^{(i)}(\X_{s,t})\right)\\
&-\big(\nabla G(\X_s)-\nabla G(\X_{s,t})\big)\Big\Vert^2\Big]  +2\mathbb{E}[\Vert \nabla G(\X_s)-\tilde{\nabla}g(\X_s)\Vert^2].
\end{align*}


Using (\ref{eq:hatBound}) and (\ref{eq:tildeBound}), we get
\begin{align*}
\mathbb{E}[\Vert \nabla G(\X_{s,t})-\hat{\nabla}g(\X_{s,t})\Vert^2] \le& \frac{2}{k_t}\mathbb{E}[\Vert \nabla g^{(1)}(\X_s)-\nabla g^{(1)}(\X_{s,t})\\
&-(\nabla G(\X_s)-\nabla G(\X_{s,t}))\Vert^2]+\frac{2\sigma^2}{k_s}.
\end{align*}

For any random vector $\v$, the variance is bounded by its second moment, i.e. $\mathbb{E}[\Vert \v-\mathbb{E}[\v]\Vert^2]\le\mathbb{E}[\Vert \v\Vert^2]$. In our case $\mathbb{E}[ \nabla g^{(1)}(\X_s)-\nabla g^{(1)}(\X_{s,t})]=\nabla G(\X_s)-\nabla G(\X_{s,t})$. Therefore,
\begin{equation*}
\begin{split}
 \mathbb{E}[\Vert \nabla G(\X_{s,t})-\hat{\nabla}g(\X_{s,t})\Vert^2] &\le \frac{2}{k_t}\mathbb{E}[\Vert \nabla g^{(1)}(\X_s)-\nabla g^{(1)}(\X_{s,t})\Vert^2]+\frac{2\sigma^2}{k_s}
\\ & = \frac{2}{k_t}\mathbb{E}[\Vert \nabla g^{(1)}(\X_s)-\nabla g^{(1)}(\X^*)\\
&-\nabla g^{(1)}(\X_{s,t})+\nabla g^{(1)}(\X^*)\Vert^2]+\frac{2\sigma^2}{k_s}
\\ & \le \frac{4}{k_t}\mathbb{E}[\Vert \nabla g^{(1)}(\X_s)-\nabla g^{(1)}(\X^*)\Vert^2]\\
&+\frac{4}{k_t}\mathbb{E}[\Vert\nabla g^{(1)}(\X_{s,t})-\nabla g^{(1)}(\X^*)\Vert^2]+\frac{2\sigma^2}{k_s}.
\end{split}
\end{equation*}

Using the $\beta_G$-smoothness of $g^{(1)}$ we have
\begin{equation*}
\begin{split}
\mathbb{E}[\Vert \nabla G(\X_{s,t})-\hat{\nabla}g(\X_{s,t})\Vert^2] & \le \frac{4\beta_G^2}{k_t}\E[\Vert\X_s-\X^*\Vert^2]+\frac{4\beta_G^2}{k_t}\E[\Vert\X_{s,t}-\X^*\Vert^2]+\frac{2\sigma^2}{k_s}.
\end{split}
\end{equation*}

Finally, using the $\alpha$-strong convexity of $f$ we obtain
\begin{equation*}
\begin{split}
 \mathbb{E}[\Vert \nabla G(\X_{s,t})-\hat{\nabla}g(\X_{s,t})\Vert^2] \le &\frac{8\beta_G^2}{\alpha k_t}(\mathbb{E}[f(\X_{s})]-f(\X^*))\\
&+\frac{8\beta_G^2}{\alpha k_t}(\mathbb{E}[f(\X_{s,t})]-f(\X^*))+\frac{2\sigma^2}{k_s}.
\end{split}
\end{equation*}

\end{proof}

The following theorem bounds the approximation error of Algorithm \ref{alg:stoch}.

\begin{theorem} \label{thm:1}
Assume that \cref{Ass1} holds.  Let $\{\X_{s}\}_{s\ge1}$ be a sequence generated by Algorithm \ref{alg:stoch} with parameters $T=\frac{8\beta}{3\alpha}\ln{8}+1$, $\eta_t=\frac{\alpha}{2\beta}$, $k_s=\frac{32\sigma^2}{\alpha C_{0}}2^{s-1}$ and $k_t=\frac{32\beta_{G}^2}{\alpha^2}$, where $C_{0}\ge h_1$. Then,  for all $s\ge1$ it holds that:
\begin{equation} \label{eq:s_rate}
\begin{split}
\mathbb{E}[f(\X_{s})]-f(\X^*) & \le C_{0}\left(\frac{1}{2}\right)^{s-1}+\frac{8\alpha\delta}{7}.
\end{split}
\end{equation}
\end{theorem}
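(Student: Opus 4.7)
The plan is to prove \eqref{eq:s_rate} by induction on the epoch index $s$. The base case $s=1$ amounts to $h_1 := \E[f(\X_1)] - f(\X^*) \le C_0$, which holds directly by the hypothesis $C_0 \ge h_1$ (the $8\alpha\delta/7$ term only makes the bound looser). All the real work is in the inductive step.

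For the inductive step, I would combine Corollary~\ref{corr:errDec} with the variance bound of Lemma~\ref{lemma:bound}, plugging in the chosen parameters $\eta_t = \alpha/(2\beta)$ (which saturates the hypothesis $2\beta\eta_t \le \alpha$) and $k_t = 32\beta_G^2/\alpha^2$ (which makes the coefficient $\frac{4\beta_G^2}{\alpha\beta k_t}$ equal to $\frac{\alpha}{8\beta}$). Writing $h_{s,t} := \E[f(\X_{s,t})] - f(\X^*)$, this produces the within-epoch linear recursion
\[
h_{s,t+1} \le q\, h_{s,t} + \frac{\alpha}{8\beta}\, h_s + \frac{\sigma^2}{\beta k_s} + \frac{\alpha^2\delta}{4\beta}, \qquad q := 1 - \frac{3\alpha}{8\beta},
\]
in which only $h_{s,t}$ depends on $t$; the other quantities are fixed throughout epoch $s$.

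Next I would unroll this recursion from $t=1$ (with $h_{s,1} = h_s$) through $t = T-1$. Using the closed-form geometric sum $\sum_{i=0}^{T-2} q^i = (1-q^{T-1})/(1-q)$ together with the identity $1/(1-q) = 8\beta/(3\alpha)$, I obtain
\[
h_{s+1} \le q^{T-1}\, h_s + (1 - q^{T-1})\left( \frac{h_s}{3} + \frac{8\sigma^2}{3\alpha k_s} + \frac{2\alpha\delta}{3} \right).
\]
The choice $T - 1 = (8\beta/3\alpha)\ln 8$ ensures $q^{T-1} \le e^{-(1-q)(T-1)} = 1/8$, so the effective coefficient of $h_s$ collapses to $1/8 + (7/8)(1/3) = 5/12$. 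Substituting $k_s = 32\sigma^2/(\alpha C_0)\cdot 2^{s-1}$ turns the sampling-noise term into $(7/8)\cdot C_0/(6\cdot 2^s)$, and applying the inductive hypothesis $h_s \le C_0\cdot 2^{1-s} + 8\alpha\delta/7$ to the $h_s$ occurrences gives, after a bit of bookkeeping, $h_{s+1} \le \tfrac{47}{48} C_0 \cdot 2^{-s} + \tfrac{89}{84}\,\alpha\delta$, which is comfortably below the target $C_0 \cdot 2^{-s} + 8\alpha\delta/7$.

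The main obstacle, and the reason the constants are tuned exactly as stated, is the delicate numerical balance in the unrolled bound. If one loosely replaces $1 - q^{T-1}$ by $1$ in the geometric-series factor, the coefficient on $h_s$ jumps from $5/12$ to $1/8 + 1/3 = 11/24$; combined with the factor of $2$ in the inductive ansatz $C_0 \cdot 2^{1-s}$, this would push the coefficient of $C_0/2^s$ above $1$ and wreck the induction. Keeping the exact factor $(1 - q^{T-1})$ is therefore essential. An analogous careful accounting is needed for the $\delta$ contributions: the stated constant $8\alpha\delta/7 = 96\alpha\delta/84$ is calibrated so that it strictly absorbs the per-epoch increment $(5/12)(8\alpha\delta/7) + (7/8)(2\alpha\delta/3) = 89\alpha\delta/84$, leaving just enough slack to close the induction.
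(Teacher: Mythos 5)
Your argument is the same as the paper's: the same within-epoch recursion obtained by combining Corollary~\ref{corr:errDec} with Lemma~\ref{lemma:bound} under the stated parameter choices, the same unrolling over $t$, the same use of $q^{T-1}\le e^{-(1-q)(T-1)}=1/8$ to get the contraction factor $5/12$, and the same induction over $s$. One numeric step, however, is backwards: from $q^{T-1}\le 1/8$ you may conclude $1-q^{T-1}\ge 7/8$, not $1-q^{T-1}\le 7/8$, so multiplying the sampling-noise and $\delta$ terms by $7/8$ is not a legitimate upper bound; the correct (and only available) factor there is $1-q^{T-1}\le 1$, which is what the paper uses. With that correction the induction still closes, but with exact equality rather than the ``comfortable'' slack you report: $\tfrac{5}{12}\cdot 2+\tfrac{1}{6}=1$ for the $C_0 2^{-s}$ coefficient, and $\tfrac{5}{12}\cdot\tfrac{8\alpha\delta}{7}+\tfrac{2\alpha\delta}{3}=\tfrac{8\alpha\delta}{7}$, i.e., $8\alpha\delta/7$ is precisely the fixed point of $x\mapsto \tfrac{5}{12}x+\tfrac{2\alpha\delta}{3}$, so there is no room to spare. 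Your final observation about not replacing $1-q^{T-1}$ by $1$ in the coefficient of $h_s$ is correct and is indeed the delicate point of the calculation.
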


\begin{proof}
Let us define $h_s := \E[f(\X_s)] - f(\X^*)$ for all $s\geq 1$, and $h_{s,t} := \E[f(\X_{s,t})]-f(\X^*)$ for all $s,t\geq 1$. Fix some epoch $s$ and iteration $t$ of the inner loop.

Using Corollary \ref{corr:errDec} and Lemma \ref{lemma:bound} we have that
\begin{equation*}
\begin{split}
h_{s,t+1} & \le \left(1-\eta_t\right)h_{s,t}+\frac{1}{2\beta}\left(\frac{8\beta_G^2}{\alpha k_t}h_s+\frac{8\beta_G^2}{\alpha k_t}h_{s,t}+\frac{2\sigma^2}{k_s}\right)+\beta\eta_{t}^{2}\delta
\\ & = \left(1-\eta_t+\frac{4\beta_G^2}{\alpha\beta k_t} \right)h_{s,t}+\left(\frac{4\beta_G^2}{\alpha\beta k_t}h_s+\frac{\sigma^2}{\beta k_s}+\beta\eta_{t}^{2}\delta\right).
\end{split}
\end{equation*}

Plugging $k_t=\frac{16\beta_G^2}{\alpha\beta\eta_t}$ we get
\begin{equation*}
\begin{split}
h_{s,t+1} & \le \left(1-\eta_t+\frac{\eta_t}{4} \right)h_{s,t}+\left(\frac{\eta_t}{4}h_s+\frac{\sigma^2}{\beta k_s}+\beta\eta_{t}^{2}\delta\right).
\end{split}
\end{equation*}

Plugging $\eta_t=\frac{\alpha}{2\beta}$ we get
\begin{equation*}
\begin{split}
h_{s,t+1} & \le \left(1-\frac{3\alpha}{8\beta}\right)h_{s,t}+\left(\frac{\alpha}{8\beta}h_s+\frac{\sigma^2}{\beta k_s}+\frac{\alpha^{2}\delta}{4\beta}\right).
\end{split}
\end{equation*}

Fixing an epoch $s$ and unrolling the recursion for $t= (T-1)\dots{}1$ we get 
\begin{equation*}
\begin{split}
h_{s,T} & \le \left(1-\frac{3\alpha}{8\beta}\right)h_{s,T-1}+\left(\frac{\alpha}{8\beta}h_s+\frac{\sigma^2}{\beta k_s}+\frac{\alpha^{2}\delta}{4\beta}\right)
\\ & \le ... \le \left(1-\frac{3\alpha}{8\beta}\right)^{T-1}h_{s,1}+\left(\frac{\alpha}{8\beta}h_s+\frac{\sigma^2}{\beta k_s}+\frac{\alpha^{2}\delta}{4\beta}\right)\sum_{k=1}^{T-1}\left(1-\frac{3\alpha}{8\beta}\right)^{T-k-1}
\\ & = \left(1-\frac{3\alpha}{8\beta}\right)^{T-1}h_{s,1}+\left(\frac{1}{3}h_s+\frac{8\sigma^2}{3\alpha k_s}+\frac{2\alpha\delta}{3}\right)\left(1-\left(1-\frac{3\alpha}{8\beta}\right)^{T-1}\right).
\end{split}
\end{equation*}

$h_{s,T}=h_{s+1}$ and $h_{s,1}=h_{s}$ and so
\begin{equation*}
\begin{split}
h_{s+1} & \le \left(1-\frac{3\alpha}{8\beta}\right)^{T-1}h_{s}+\left(\frac{1}{3}h_s+\frac{8\sigma^2}{3\alpha k_s}+\frac{2\alpha\delta}{3}\right)\left(1-\left(1-\frac{3\alpha}{8\beta}\right)^{T-1}\right)
\\ & = \left(\frac{1}{3}+\frac{2}{3}\left(1-\frac{3\alpha}{8\beta}\right)^{T-1}\right)h_{s}+\left(\frac{8\sigma^2}{3\alpha k_s}+\frac{2\alpha\delta}{3}\right)\left(1-\left(1-\frac{3\alpha}{8\beta}\right)^{T-1}\right)
\\ & \le \left(\frac{1}{3}+\frac{2}{3}e^{-\frac{3\alpha}{8\beta}(T-1)}\right)h_{s}+\left(\frac{8\sigma^2}{3\alpha k_s}+\frac{2\alpha\delta}{3}\right)\left(1-\left(1-\frac{3\alpha}{8\beta}\right)^{T-1}\right).
\end{split}
\end{equation*}

Choosing $T=\frac{8\beta}{3\alpha}\ln{8}+1$, we get
\begin{equation} \label{eq:2}
\begin{split}
h_{s+1} & \le \left(\frac{1}{3}+\frac{2}{3}e^{-\frac{3\alpha}{8\beta}(\frac{8\beta}{3\alpha}\ln{8})}\right)h_{s}+\left(\frac{8\sigma^2}{3\alpha k_s}+\frac{2\alpha\delta}{3}\right)\left(1-\left(1-\frac{3\alpha}{8\beta}\right)^{\frac{8\beta}{3\alpha}\ln{8}}\right)
\\ & = \frac{5}{12}h_{s}+\left(\frac{8\sigma^2}{3\alpha k_s}+\frac{2\alpha\delta}{3}\right)\left(1-\left(1-\frac{3\alpha}{8\beta}\right)^{\frac{8\beta}{3\alpha}\ln{8}}\right)
\\ & \le \frac{5}{12}h_{s}+\frac{8\sigma^2}{3\alpha k_s}+\frac{2\alpha\delta}{3}.
\end{split}
\end{equation}

Now, we use induction over $s$ to prove our claimed bound 
\begin{equation} \label{eq:inductionHypnosis}
h_s \le C_{0}\left(\frac{1}{2}\right)^{s-1}+\frac{8\alpha\delta}{7}. 
\end{equation}

The base case $s=1$, follows from the choice $C_{0}\ge h_1$.

For $s\ge1$ using (\ref{eq:2}) with $k_s=\frac{32\sigma^2}{\alpha C_{0}}2^{s-1}$ we get,
\begin{equation*}
\begin{split}
h_{s+1} & \le \frac{5}{12}h_s+\frac{C_0}{12}\left(\frac{1}{2}\right)^{s-1}+\frac{2\alpha\delta}{3}.
\end{split}
\end{equation*}

Using the induction hypothesis for $h_s$ in (\ref{eq:inductionHypnosis}) gives us
\begin{equation*}
\begin{split}
h_{s+1} & \le \frac{5}{12}C_{0}\left(\frac{1}{2}\right)^{s-1}+\frac{5}{12}\frac{8\alpha\delta}{7}+\frac{C_{0}}{12}\left(\frac{1}{2}\right)^{s-1}+\frac{2\alpha\delta}{3} = C_{0}\left(\frac{1}{2}\right)^{s}+\frac{8\alpha\delta}{7}.
\end{split}
\end{equation*}

\end{proof}


We now prove \cref{thm:stocohasticRates}, which is a direct corollary of \cref{thm:1}. 

\begin{proof}[Proof of \cref{thm:stocohasticRates}]

By \cref{thm:1} it is implied that to achieve an $\varepsilon$-expected error, it suffices to fix $\delta = \frac{7\epsilon}{16\alpha}$ and to complete
\begin{equation*}
\begin{split}
S & = \log_2{\left(\frac{C_0}{\varepsilon}\right)}+2
\end{split}
\end{equation*}
epochs of Algorithm \ref{alg:stoch}.

For this number of epochs we upper bound the overall number of stochastic gradients as follows.
\begin{equation} \label{num_ks}
\begin{split}
\sum_{s=1}^{S}k_s & = \frac{32\sigma^2}{\alpha C_0}\sum_{s=1}^{S}2^{s-1}
 = \frac{32\sigma^2}{\alpha C_0}\left(2^S-1\right)
 = \frac{32\sigma^2}{\alpha C_0}\left(2^{\log_2{\left(\frac{C_0}{\varepsilon}\right)}+2}-1\right)
 \leq \frac{128\sigma^2}{\alpha}\frac{1}{\epsilon}.
\end{split}
\end{equation}

\begin{equation} \label{num_kt}
\begin{split}
\sum_{s=1}^{S}\sum_{t=1}^{T}k_t & = \sum_{s=1}^{S}\sum_{t=1}^{T}\frac{32\beta_G^2}{\alpha^2}
 = \frac{32\beta_G^2}{\alpha^2}\left(\frac{8\beta}{3\alpha}\ln{8}+1\right)\left(\log_2{\left(\frac{C_0}{\varepsilon}\right)}+2\right).
\end{split}
\end{equation}

All together,
\begin{equation} \label{num_ks_kt}
\begin{split}
\sum_{s=1}^{S}k_s+\sum_{s=1}^{S}\sum_{t=1}^{T}k_t & \leq \frac{128\sigma^2}{\alpha}\frac{1}{\epsilon}+ \frac{32\beta_G^2}{\alpha^2}\left(\frac{8\beta}{3\alpha}\ln{8}+1\right)\left(\log_2{\left(\frac{C_0}{\varepsilon}\right)}+2\right).
\end{split}
\end{equation} 

\end{proof}

\subsection{Finite-sum setting} \label{sec:finiteSum}
In this section we assume that $G(\X)$ from Problem (\ref{eq:generalModel}) is in the form of a finite sum, i.e.
\[ G(\X)=\frac{1}{n}\sum_{i=1}^{n}g_{i}(\X). \]

The stochastic oracle in this setting simply samples a function $g_i(\X)$, $i\in[n]$, uniformly at random. In this case, in the outer loop of \cref{alg:stoch} we take \[ \tilde{\nabla}g(\X)=\frac{1}{n}\sum_{i=1}^{n}\nabla g_{i}(\X)=\nabla G(\X). \]

\begin{algorithm}[H]
	\caption{Finite-Sum Variance-Reduced Generalized Conditional Gradient}
	\label{alg:stochFiniteSum}
	\begin{algorithmic}
		\STATE  \textbf{Input:} $T$, $\{\eta_t\}_{t=1}^{T-1}\subset[0,1]$, $\{k_t\}_{t=1}^{T-1}, \{k_s\}_{s\geq 1}\subset\mathbb{N}$, $\delta\ge0$.
		\STATE \textbf{Initialization:} Choose some $\X_{1}\in dom(h)$.		 
		\FOR {$s=1,2,...$}	
		\STATE $\tilde{\nabla}g(\X_s)=\frac{1}{n}\sum_{i=1}^{n}\nabla g_{i}(\X_s)$ \COMMENT{snap-shot gradient}.
		\STATE $\X_{s,1}=\X_{s}$
		\FOR{$t = 1,2,...,T-1$} 
			\STATE Sample $g^{(1)},...,g^{(k_t)}$ from $\mathcal{D}$.
			\STATE Define $\hat{\nabla}g(\X_{s,t})=\frac{1}{k_t}\sum_{i=1}^{k_t}\left(\nabla g^{(i)}(\X_{s,t})-\left(\nabla g^{(i)}(\X_{s})-\tilde{\nabla}g(\X_s)\right)\right)$.
			\STATE $\V_{t}=\argmin\limits_{\V \in \mathbb{V}} \left\{ \psi_t(\V):= \Vert \V-\X_{s,t}+\frac{1}{2\beta\eta_t}(\hat{\nabla}g(\X_{s,t})+\nabla R(\X_{s,t}))\Vert^{2}_{F}+\frac{1}{\beta\eta_t}h(\V)\right\}$ 
			\\ \COMMENT{in fact it suffices that $\psi_t(\V_t) \leq \psi_t(\X^*)+\delta$ for some optimal solution $\X^*$}.
            \STATE $ \X_{s,t+1}=(1-\eta_t)\X_{s,t}+\eta_t \V_t$
         \ENDFOR
         \STATE $\X_{s+1}=\X_{s,T}$
         \ENDFOR
	\end{algorithmic}
\end{algorithm}

The following theorem is analogous to Theorem \ref{thm:1} and bounds the approximation error of Algorithm \ref{alg:stochFiniteSum}.

\begin{theorem} \label{thm:1FiniteSum}
Assume that \cref{Ass1} holds.  Let $\{\X_{s}\}_{s\ge1}$ be a sequence generated by Algorithm \ref{alg:stochFiniteSum}. Then, Algorithm \ref{alg:stochFiniteSum} with $T=\frac{8\beta}{3\alpha}\ln{8}+1$ iterations of the inner loop at each epoch $s$, a step size of $\eta_t=\frac{\alpha}{2\beta}$, and $k_t=\frac{32\beta_{G}^2}{\alpha^2}$ gradients implemented by the stochastic oracle at inner loop iterations $t$, such that $C_{0}\ge h_1$, for all $s\ge1$ guarantees that:
\begin{equation*}
\begin{split}
\mathbb{E}[f(\X_{s})]-f(\X^*) & \le C_{0}\left(\frac{5}{12}\right)^{s-1}+\frac{8\alpha\delta}{7}.
\end{split}
\end{equation*}
\end{theorem}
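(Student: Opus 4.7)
The plan is to mirror the proof of Theorem \ref{thm:1} almost verbatim, taking advantage of the crucial simplification that in the finite-sum setting the snapshot gradient $\tilde{\nabla}g(\X_s) = \nabla G(\X_s)$ is exact. This eliminates the $\sigma^2/k_s$ term from the variance bound, so there is no longer any need to let $k_s$ grow geometrically across epochs, and the recursion collapses to a cleaner one-term form.

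First, I would re-derive the finite-sum analog of Lemma \ref{lemma:bound}. Repeating the calculation of $\mathbb{E}[\Vert \nabla G(\X_{s,t})-\hat{\nabla}g(\X_{s,t})\Vert^2]$, the inequality (\ref{eq:tildeBound}) becomes an exact zero because $\tilde{\nabla}g(\X_s)-\nabla G(\X_s)=0$ deterministically. All other steps (the triangle/variance-bounded-by-second-moment argument, the $\beta_G$-smoothness of each $g^{(1)}$, and the $\alpha$-strong convexity of $f$ to convert distances to function-value gaps) go through unchanged, yielding
\begin{equation*}
\sigma_{s,t}^2 \le \frac{8\beta_G^2}{\alpha k_t}\bigl(\mathbb{E}[f(\X_s)]-f(\X^*)\bigr) + \frac{8\beta_G^2}{\alpha k_t}\bigl(\mathbb{E}[f(\X_{s,t})]-f(\X^*)\bigr).
\end{equation*}

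Next, with $h_s,h_{s,t}$ defined as in the proof of Theorem \ref{thm:1}, I would apply Corollary \ref{corr:errDec} together with this improved variance bound, plug in the choices $k_t=32\beta_G^2/\alpha^2$ and $\eta_t=\alpha/(2\beta)$, which gives the per-iteration recursion
\begin{equation*}
h_{s,t+1} \le \Bigl(1-\tfrac{3\alpha}{8\beta}\Bigr)h_{s,t} + \tfrac{\alpha}{8\beta}h_s + \tfrac{\alpha^2\delta}{4\beta}.
\end{equation*}
Unrolling this recursion for $t = T-1,\dots,1$ with $T = \frac{8\beta}{3\alpha}\ln 8 + 1$, and summing the resulting geometric series, produces (using $h_{s,T}=h_{s+1}$ and $h_{s,1}=h_s$)
\begin{equation*}
h_{s+1} \le \tfrac{5}{12}\,h_s + \tfrac{2\alpha\delta}{3},
\end{equation*}
where the contraction factor $5/12$ comes out of the $\tfrac{1}{3}+\tfrac{2}{3}e^{-\ln 8}$ computation exactly as in (\ref{eq:2}), but now there is no $8\sigma^2/(3\alpha k_s)$ additive term.

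Finally, I would close out by induction on $s$, exactly mimicking the end of the proof of Theorem \ref{thm:1}. The base case $s=1$ is the hypothesis $C_0 \ge h_1$. For the inductive step, substituting $h_s \le C_0 (5/12)^{s-1} + 8\alpha\delta/7$ into the one-step bound gives
\begin{equation*}
h_{s+1} \le \tfrac{5}{12}\Bigl(C_0(\tfrac{5}{12})^{s-1}+\tfrac{8\alpha\delta}{7}\Bigr) + \tfrac{2\alpha\delta}{3} = C_0\bigl(\tfrac{5}{12}\bigr)^s + \tfrac{8\alpha\delta}{7},
\end{equation*}
since $\tfrac{5}{12}\cdot\tfrac{8}{7} + \tfrac{2}{3} = \tfrac{8}{7}$, which closes the induction. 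There is no genuine obstacle here; the only thing to double-check is this final arithmetic identity that makes the $\delta$-coefficient a fixed point of the one-step map, and the cancellation that lets us drop the $k_s$ schedule entirely (this is precisely the payoff of having an exact snapshot gradient in the finite-sum setting).
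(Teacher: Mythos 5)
Your proposal is correct and follows essentially the same route as the paper: it drops the $2\sigma^2/k_s$ term from the variance bound because the finite-sum snapshot gradient is exact, obtains the identical recursion $h_{s,t+1}\le\left(1-\tfrac{3\alpha}{8\beta}\right)h_{s,t}+\tfrac{\alpha}{8\beta}h_s+\tfrac{\alpha^2\delta}{4\beta}$, unrolls it to $h_{s+1}\le\tfrac{5}{12}h_s+\tfrac{2\alpha\delta}{3}$, and closes with the same induction (the paper writes the geometric sum $\tfrac{2\alpha\delta}{3}\sum_k(5/12)^{s-k}\le\tfrac{8\alpha\delta}{7}$, which is exactly your fixed-point identity $\tfrac{5}{12}\cdot\tfrac{8}{7}+\tfrac{2}{3}=\tfrac{8}{7}$). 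No gaps.
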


\begin{proof}

Since $\tilde{\nabla}g(\X)=\frac{1}{n}\sum_{i=1}^{n}\nabla g_{i}(\X)=\nabla G(\X)$, we get that
$\mathbb{E}[\Vert \nabla G(\X_s)-\tilde{\nabla}g(\X_s)\Vert^2]=0$.
Using this inequality instead of \eqref{eq:tildeBound} in the proof of Lemma \ref{lemma:bound}, directly gives us the improved bound:
\begin{equation*}
\begin{split}
\mathbb{E}[\Vert \nabla G(\X_{s,t})-\hat{\nabla}g(\X_{s,t})\Vert^2] \le \frac{8\beta_G^2}{\alpha k_t}(\mathbb{E}[f(\X_{s})]-f(\X^*))+\frac{8\beta_G^2}{\alpha k_t}(\mathbb{E}[f(\X_{s,t})]-f(\X^*)).
\end{split}
\end{equation*}

We define $h_s, h_{s,t}$ for all $s,t\geq 0$ as in the proof of Theorem \ref{thm:1}.

Plugging the above new bound into Corollary \ref{corr:errDec}, we get
\begin{equation*}
\begin{split}
h_{s,t+1} & \le \left(1-\eta_t \right)h_{s,t}+\frac{1}{2\beta}\left(\frac{8\beta_G^2}{\alpha k_t}h_s+\frac{8\beta_G^2}{\alpha k_t}h_{s,t}\right)+\beta\eta_t^{2}\delta
\\ & = \left(1-\eta_t+\frac{4\beta_G^2}{\alpha\beta k_t} \right)h_{s,t}+\frac{4\beta_G^2}{\alpha\beta k_t}h_s+\beta\eta_t^{2}\delta.
\end{split}
\end{equation*}

From here the rest of the proof closely follows that of Theorem \ref{thm:1}.

Taking $k_t=\frac{16\beta_G^2}{\alpha\beta\eta_t}$,
\begin{equation*}
\begin{split}
h_{s,t+1} & \le \left(1-\eta_t+\frac{\eta_t}{4} \right)h_{s,t}+\frac{\eta_t}{4}h_s+\beta\eta_t^{2}\delta.
\end{split}
\end{equation*}

Taking $\eta_t=\frac{\alpha}{2\beta}$ we get
\begin{equation*}
\begin{split}
h_{s,t+1} & \le \left(1-\frac{3\alpha}{8\beta}\right)h_{s,t}+\frac{\alpha}{8\beta}h_s+\frac{\alpha^2\delta}{4\beta}.
\end{split}
\end{equation*}

Unrolling the recursion for all $t$ in epoch $s$:
\begin{equation*}
\begin{split}
h_{s,T} & \le \left(1-\frac{3\alpha}{8\beta}\right)h_{s,T-1}+\frac{\alpha}{8\beta}h_s+\frac{\alpha^2\delta}{4\beta}
\\ & \le ... \le \left(1-\frac{3\alpha}{8\beta}\right)^{T-1}h_{s,1}+\left(\frac{\alpha}{8\beta}h_s+\frac{\alpha^2\delta}{4\beta}\right)\sum_{k=1}^{T-1}\left(1-\frac{3\alpha}{8\beta}\right)^{T-k-1}
\\ & = \left(1-\frac{3\alpha}{8\beta}\right)^{T-1}h_{s,1}+\left(\frac{1}{3}h_s+\frac{2\alpha\delta}{3}\right)\left(1-\left(1-\frac{3\alpha}{8\beta}\right)^{T-1}\right).
\end{split}
\end{equation*}

$h_{s,T}=h_{s+1}$ and $h_{s,1}=h_{s}$ and so
\begin{equation*}
\begin{split}
h_{s+1} & \le \left(1-\frac{3\alpha}{8\beta}\right)^{T-1}h_{s}+\left(\frac{1}{3}h_s+\frac{2\alpha\delta}{3}\right)\left(1-\left(1-\frac{3\alpha}{8\beta}\right)^{T-1}\right)
\\ & = \left(\frac{1}{3}+\frac{2}{3}\left(1-\frac{3\alpha}{8\beta}\right)^{T-1}\right)h_{s}+\frac{2\alpha\delta}{3}
\\ & \le \left(\frac{1}{3}+\frac{2}{3}e^{-\frac{3\alpha}{8\beta}(T-1)}\right)h_{s}+\frac{2\alpha\delta}{3}.
\end{split}
\end{equation*}

Choosing $T=\frac{8\beta}{3\alpha}\ln{8}+1$, we get
\begin{equation*}
\begin{split}
h_{s+1} & \le \left(\frac{1}{3}+\frac{2}{3}e^{-\frac{3\alpha}{8\beta}(\frac{8\beta}{3\alpha}\ln{8})}\right)h_{s}+\frac{2\alpha\delta}{3} = \frac{5}{12}h_{s}+\frac{2\alpha\delta}{3}.
\end{split}
\end{equation*}

Using the same induction argument as in the proof of Theorem \ref{thm:1}, we conclude that for all $s$:
\begin{equation*}
\begin{split}
h_{s+1} & \le \left(\frac{5}{12}\right)^s C_{0}+\frac{2\alpha\delta}{3}\sum_{k=1}^{s}\left(\frac{5}{12}\right)^{s-k}=\left(\frac{5}{12}\right)^s C_{0}+\frac{24\alpha\delta}{21}\left(1-\left(\frac{5}{12}\right)^{s}\right)
\\ & \le\left(\frac{5}{12}\right)^s C_{0}+\frac{8\alpha\delta}{7}.
\end{split}
\end{equation*}

\end{proof}


We now prove \cref{thm:finiteSumRates}, which is a direct corollary of \cref{thm:1FiniteSum}. 

\begin{proof}[Proof of \cref{thm:finiteSumRates}]

By \cref{thm:1FiniteSum} it is implied that to achieve an $\varepsilon$-expected error, setting $\delta = \frac{7\epsilon}{16\alpha}$, we need to compute at most
\begin{equation*}
\begin{split}
S & = \log_{\frac{12}{5}}{\left(\frac{2C_0}{\varepsilon}\right)}+1
\end{split}
\end{equation*}
epochs of Algorithm \ref{alg:stochFiniteSum}.

Therefore, the overall number of exact gradients to be computes is at most
\begin{equation*}
\begin{split}
\sum_{s=1}^{S}n & = n\left(\log_{\frac{12}{5}}{\left(\frac{2C_0}{\varepsilon}\right)}+1\right),
\end{split}
\end{equation*}

and the overall number of stochastic gradients is at most
\begin{equation*}
\begin{split}
\sum_{s=1}^{S}\sum_{t=1}^{T}k_t & = \sum_{s=1}^{S}\sum_{t=1}^{T}\frac{32\beta_G^2}{\alpha^2}
 = \frac{32\beta_G^2}{\alpha^2}\left(\frac{8\beta}{3\alpha}\ln{8}+1\right)\left(\log_{\frac{12}{5}}{\left(\frac{2C_0}{\varepsilon}\right)}+1\right).
\end{split}
\end{equation*}


\end{proof}

\section{Applications to Non-smooth Problems} \label{sec:smoothing}

In this section we turn to discuss applications of our results to non-smooth problems. Concretely, we consider composite models which take the form of Model \eqref{eq:generalModel}, with the difference that we now assume that the function $R(\X)$ is \textit{non-smooth}, however, admits a known \textit{smoothing} scheme. We then discuss in detail three concrete applications of interest: recovering a simultaneously low-rank and sparse matrix, recovering a low-rank matrix subject to linear constraints, and recovering a low-rank and sparse matrix from linear measurements with the elastic-net regularizer.

\subsection{Applying our results to non-smooth problems via smoothing}

In order to fit the nonsmooth problems considered in this section to our smooth model \eqref{eq:generalModel}, we build on the smoothing framework introduced in \cite{smoothing}, which replaces the nonsmooth term $R(\X)$ with a smooth approximation.

The following definition is taken from \cite{smoothing}.
\begin{definition}
Let $R:\mathbb{V}\rightarrow(-\infty,\infty]$ be a closed, proper and convex function and let $X\subseteq dom(R)$ be a closed and convex set. $R$ is $(\theta,\gamma,K)$-smoothable over $X$ if there exists $\gamma_{1}$ and $\gamma_{2}$ such that $\gamma=\gamma_{1}+\gamma_{2}\ge0$ such that for every $\mu\ge0$ there exists a continuously differentiable function $R_{\mu}:\mathbb{V}\rightarrow(-\infty,\infty]$ such that:
\renewcommand{\labelenumi}{(\alph{enumi})}
\begin{enumerate}
\item $R(x)-\gamma_{1}\mu\le R_{\mu}(x)\le R(x)+\gamma_{2}\mu$ for every $x\in X$.
\item There exists $K\ge0$ and $\theta\ge0$ such that $\Vert\nabla R_{\mu}(x)-\nabla R_{\mu}(x)\Vert\le\left(K+\frac{\theta}{\mu}\right)\Vert x-y\Vert$ for every $x,y\in X$.
\end{enumerate}
\end{definition}

Formally, now we consider applying our algorithms to non-smooth optimization problems of the following form:
\begin{equation} \label{eq:generalModel_nonsmooth}
\min_{\X\in \mathbb{V}}{f(\X):=G(\X)+R(\X)+h(\X)},
\end{equation}

with the following assumptions (replacing Assumption \ref{Ass1}):
\begin{assumption} \label{Ass2}
\begin{itemize} 
\item $G$ is stochastic, i.e., $G(\X)=\mathbb{E}_{g\sim \mathcal{D}}[g(\X)]$, where $\mathcal{D}$ is a distribution over functions $g:\mathbb{V}\rightarrow\mathbb{R}$, given by a sampling oracle. $G$ is convex and differentiable, and for all $g\in supp(\mathcal{D})$, $g$ is $\beta_{G}$-smooth, and there exists $\sigma\ge0$ such that $\sigma \ge \sup\limits_{\X\in\mathbb{V}}\sqrt{\mathbb{E}[\Vert\nabla G(\X)-\nabla g(\X)\Vert^{2}]}$.
\item $R:\mathbb{V}\rightarrow (-\infty,\infty]$ is deterministic, $(\theta,\gamma,K)$-smoothable, and convex.
\item $G+R$ is $\alpha$-strongly convex. 
\item $h:\mathbb{V}\rightarrow (-\infty,\infty]$ is deterministic, non-smooth, proper, lower semicontinuous and convex.
\end{itemize}
\end{assumption}

We will denote the $\mu$-smooth approximation of $R(\X)$ as $R_{\mu}(\X)$, and its smoothness parameter to be $\beta_R=\left(K+\frac{\theta}{\mu}\right)$. 

As in our discussions so far, considering Model \eqref{eq:generalModel_nonsmooth} especially in the context of low-rank matrix optimization problems (e.g., $h(\cdot)$ is an indicator function for a nuclear-norm ball or the trace-bounded positive semidefinite cone, or an analogous regularization function),
we assume that the optimal solution $\X^*$ is naturally of low-rank and we want to rely on SVD computations whose rank does not exceeds that of $\X^*$ - the optimal solution to the \textit{original non-smooth problem}. However, when put in the context of this section and considering Model \eqref{eq:generalModel_nonsmooth}, the rank of SVD computations required by the results developed in previous sections corresponded to the optimal solution of the \textit{smoothed problem}, i.e., after $R(\cdot)$ is replaced with a smooth approximation $R_{\mu}(\cdot)$. In particular, it can very much be the case, that even though the optimal solution to the smooth problem is very close (both in norm and in function value) to the optimal solution of the non-smooth problem, its rank is much higher. Thus, in this section, towards developing an algorithm that relies on SVD computation with rank at most that of the \textit{non-smooth optimum}, we introduce the following modified definition of a weak-proximal oracle.


\begin{definition} \label{Def:weakProx}
We say an Algorithm $\mA$ is a $(\delta_1,\delta_2)$-weak proximal oracle for Model \eqref{eq:generalModel_nonsmooth},
if for point $\X\in dom(h)$ and step-size $\eta$, $\mA(\X,\eta)$ returns a point $\V\in dom(h)$ such that $\psi(\V,\X,\eta) \leq \psi(\tilde{\X}^*,\X,\eta)+\delta_1$,
where $\tilde{\X}^*$ is a feasible point satisfying $\vert{f(\X^*) - f(\tilde{\X}^*)}\vert \leq \delta_2$, 
\begin{eqnarray*}
\psi(\V,\X,\eta):= \Vert \V-\X+\frac{1}{2\beta\eta_t}(\hat{\nabla}g(\X)+\nabla R_{\mu}(\X))\Vert^{2}+\frac{1}{\beta\eta_t}h(\V),
\end{eqnarray*}
and $R_{\mu}(\cdot)$ is the $\mu$-smooth approximation of $R(\cdot)$.
\end{definition}

Henceforth, we consider Algorithm \ref{alg:stoch} with the single difference: now $\V_t$ is the ouput of a $(\delta_1,\delta_2)$-weak proximal oracle, as defined in Definition \ref{Def:weakProx}.

Note that in the context of low-rank problems and in the ideal case $\delta_1=\delta_2=0$\footnote{these can be made arbitarily small by the choice of smoothing parameter and accuracy in SVD computations.}, the implementation of the oracle in Definition \ref{Def:weakProx} is exactly the same as the weak proximal oracle discussed before, i.e., if $h(\cdot)$ is for instance the indicator function for a radius-$\tau$ nuclear-norm ball, then implementing the oracle in Definition \ref{Def:weakProx} amounts to a Euclidean projection of the $\rank(\X^*)$-approximation of $\A_t := \X-\frac{1}{2\beta\eta_t}(\hat{\nabla}g(\X)+\nabla R_{\mu}(\X))$ onto the nuclear-norm ball. Here, the tolerances $\delta_1,\delta_2$ allow us to absorb the error due to the smoothing approximation and numerical errors in SVD computations.

The following theorem is analogues to Theorem \ref{thm:1}.

\begin{theorem} \label{thm:nonsmooth}
Assume that \cref{Ass2} holds.  Let $\{\X_{s}\}_{s\ge1}$ be a sequence generated by \cref{alg:stoch} when applied to the smooth approximation of Problem (\ref{eq:generalModel_nonsmooth}), and let $\X^*$ denote the optimal solution of the non-smooth problem.
Then, using the parameters  $T=\frac{8\beta}{3\alpha}\ln{8}+1$, $\eta_t=\frac{\alpha}{2\beta}$, $k_s=\frac{32\sigma^2}{\alpha C_{0}}2^{s-1}$ and $k_t=\frac{32\beta_{G}^2}{\alpha^2}$ for $C_0$ such that $C_{0}\ge h_1$, guarantees that for all $s\ge1$:
\begin{equation*} 
\begin{split}
\mathbb{E}[f(\X_{s})]-f(\X^*) & \le C_{0}\left(\frac{1}{2}\right)^{s-1}+\frac{8}{7}\alpha\delta_{1}+\frac{23}{7}\gamma\mu.
\end{split}
\end{equation*}
\end{theorem}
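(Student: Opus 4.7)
The plan is to run the same epoch-based recursion used to prove Theorem \ref{thm:1}, but applied to the smoothed objective $f_\mu := G + R_\mu + h$ that the algorithm actually sees, and then to convert the resulting bound back to the non-smooth objective $f$ via the pointwise estimate $R(\X) - \gamma_1\mu \le R_\mu(\X) \le R(\X) + \gamma_2\mu$. Two things change relative to the smooth analysis. First, the weak proximal oracle of Definition \ref{Def:weakProx} certifies $\psi_t(\V_t) \le \psi_t(\tilde{\X}^*) + \delta_1$ against a feasible $\tilde{\X}^*$ that is close in value to the non-smooth optimum $\X^*$, rather than to the smoothed optimum $\X^*_\mu$. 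Second, the natural quantity to control is progress towards $f(\X^*)$, so it is convenient to work with $h_{s,t}^\mu := \mathbb{E}[f_\mu(\X_{s,t})] - f(\X^*)$.

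First, I would re-run the argument of Lemma \ref{lemma:psi_error} for $f_\mu$ with $\tilde{\X}$ set to $\tilde{\X}^*$ and $\delta = \delta_1$; nothing in that proof uses optimality of $\tilde{\X}$, so it carries over unchanged and gives
\begin{equation*}
\mathbb{E}[f_\mu(\X_{s,t+1})] \le (1-\eta_t)\mathbb{E}[f_\mu(\X_{s,t})] + \eta_t f_\mu(\tilde{\X}^*) + \frac{\sigma_{s,t}^2}{2\beta} + \beta\eta_t^2\delta_1.
\end{equation*}
Using $f_\mu(\tilde{\X}^*) \le f(\tilde{\X}^*) + \gamma_2\mu \le f(\X^*) + \gamma_2\mu$ (taking $\tilde{\X}^* = \X^*$ so that $\delta_2 = 0$) and subtracting $f(\X^*)$ from both sides converts this into
\begin{equation*}
h_{s,t+1}^\mu \le (1-\eta_t) h_{s,t}^\mu + \eta_t\gamma_2\mu + \frac{\sigma_{s,t}^2}{2\beta} + \beta\eta_t^2\delta_1,
\end{equation*}
which is exactly the recursion of Corollary \ref{corr:errDec} with an additional $\eta_t\gamma_2\mu$ term.

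Next, Lemma \ref{lemma:bound} goes through with $\X^*$ still used as the reference point, because $f = G + R + h$ is $\alpha$-strongly convex; but now $\|\X - \X^*\|^2 \le \frac{2}{\alpha}(f(\X) - f(\X^*)) \le \frac{2}{\alpha}(f_\mu(\X) + \gamma_1\mu - f(\X^*)) = \frac{2}{\alpha}(h_{s,t}^\mu + \gamma_1\mu)$, which introduces an additional $\frac{16\beta_G^2}{\alpha k_t}\gamma_1\mu$ additive term in the variance bound. Plugging this updated variance bound into the decrease recursion with $\eta_t = \alpha/(2\beta)$ and $k_t = 32\beta_G^2/\alpha^2$, and unrolling over the $T = (8\beta/(3\alpha))\ln 8 + 1$ inner iterations of an epoch, yields a per-epoch recursion of the form
\begin{equation*}
h_{s+1}^\mu \le \tfrac{5}{12} h_s^\mu + \tfrac{8\sigma^2}{3\alpha k_s} + \tfrac{2\alpha\delta_1}{3} + \tfrac{4}{3}\gamma_2\mu + \tfrac{2}{3}\gamma_1\mu,
\end{equation*}
where the two trailing terms arise, respectively, from the $\eta_t\gamma_2\mu$ in the decrease recursion and from the extra $\gamma_1\mu$ in the variance bound, each collapsed through the geometric sum of ratio $1 - 3\alpha/(8\beta)$.

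Finally, running the same induction as in the proof of Theorem \ref{thm:1} with $k_s = (32\sigma^2/(\alpha C_0))2^{s-1}$, and using $\gamma_1,\gamma_2 \le \gamma$ to absorb $\tfrac{16}{7}\gamma_2\mu + \tfrac{8}{7}\gamma_1\mu$ into $\tfrac{16}{7}\gamma\mu$, yields
\begin{equation*}
h_s^\mu \le C_0\left(\tfrac{1}{2}\right)^{s-1} + \tfrac{8\alpha\delta_1}{7} + \tfrac{16}{7}\gamma\mu.
\end{equation*}
Converting back via $\mathbb{E}[f(\X_s)] - f(\X^*) \le \mathbb{E}[f_\mu(\X_s)] + \gamma_1\mu - f(\X^*) = h_s^\mu + \gamma_1\mu \le h_s^\mu + \gamma\mu$ adds one more $\gamma\mu$, producing the claimed $\tfrac{8}{7}\alpha\delta_1 + \tfrac{23}{7}\gamma\mu$ residual. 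The main obstacle is less any individual estimate than the careful bookkeeping: three reference points ($\X^*$, $\X^*_\mu$, $\tilde{\X}^*$) and two objectives ($f$, $f_\mu$) must be juggled simultaneously, and one must avoid double-counting the smoothing slack $\gamma\mu$ as it enters the variance bound, the inner decrease recursion, and the final conversion.
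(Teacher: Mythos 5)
Your proof is correct and arrives at the stated constants, but it organizes the bookkeeping differently from the paper. The paper never introduces your quantity $h^{\mu}_{s,t}=\mathbb{E}[f_{\mu}(\X_{s,t})]-f(\X^*)$: it measures everything against the \emph{smoothed} optimum, using the chain $f_{\mu}(\X^*)\le f(\X^*)+\gamma_{2}\mu\le f(\X_{\mu}^*)+\gamma_{2}\mu\le f_{\mu}(\X_{\mu}^*)+\gamma\mu$ to turn the one-step inequality of Lemma \ref{lemma:psi_error} into exactly the recursion of Corollary \ref{corr:errDec} for $f_{\mu}$ with the inflated tolerance $\delta=\delta_{1}+\frac{\gamma\mu}{\beta\eta_t}$, so that Theorem \ref{thm:1} can be invoked verbatim to give $\mathbb{E}[f_{\mu}(\X_{s})]-f_{\mu}(\X_{\mu}^*)\le C_{0}(1/2)^{s-1}+\frac{8}{7}\left(\alpha\delta_{1}+2\gamma\mu\right)$, after which a single conversion step adds the last $\gamma\mu$. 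Your route avoids $\X_{\mu}^*$ altogether, at the price of re-deriving the epoch recursion and threading the smoothing slack through two places (the $\eta_t\gamma_{2}\mu$ term in the descent step and the $\gamma_{1}\mu$ term in the variance bound). I checked your constants: the per-epoch terms $\frac{4}{3}\gamma_{2}\mu$ and $\frac{2}{3}\gamma_{1}\mu$, the $\frac{12}{7}$ induction multiplier, and the total $\frac{16}{7}\gamma_{2}\mu+\frac{15}{7}\gamma_{1}\mu\le\frac{16}{7}\gamma\mu$ are all right; note that this last absorption (and hence your final bound) requires $\gamma_{1},\gamma_{2}\ge0$ individually, which the definition of smoothability does not formally demand but which holds in every example in the paper. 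What your version buys is that the strong convexity invoked in the variance bound is that of the original $f$ at $\X^*$, which is literally what Assumption \ref{Ass2} grants, whereas the paper's shortcut implicitly applies Lemma \ref{lemma:bound} to $f_{\mu}$ at $\X_{\mu}^*$ and so tacitly assumes $G+R_{\mu}$ is also $\alpha$-strongly convex. Be aware, though, that both arguments still need that latter assumption inside Lemma \ref{lemma:psi_error}, so you have not fully eliminated it.
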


\begin{proof}

Denote the smoothed function by $f_{\mu}(\X):=G(\X)+R_{\mu}(\X)+h(\X)$. Let $\X^*$ and $\X^*_{\mu}$ denote the optimal solutions of the non-smooth and smoothed functions respectively.
By applying \cref{alg:stoch} to $f_{\mu}(\X)$, such that at each iteration $\V_t$ is chosen as a point that satisfies $\psi_{t}(\V_t)\le\psi_{t}(\X^*)+\delta_{1}$, we get according to \cref{lemma:psi_error}
\begin{equation} \label{eq:t_rate_for_smoothed}
\begin{split}
\mathbb{E}[f_{\mu}(\X_{s,t+1})] & \le \left(1-\eta_t\right)\mathbb{E}[f_{\mu}(\X_{s,t})]+\eta_t f_{\mu}(\X^*)+\frac{\sigma_{s,t}^2}{2\beta}+\beta\eta_{t}^{2}\delta_{1}.
\end{split}
\end{equation}

We notice that by the definition of the smoothing and optimality of $\X^*$,
\begin{equation} \label{eq:smoothingInequalities}
\begin{split}
f_{\mu}(\X^*) \le f(\X^*)+\gamma_{2}\mu \le f(\X_{\mu}^*)+\gamma_{2}\mu \le f_{\mu}(\X_{\mu}^*)+\gamma\mu.
\end{split}
\end{equation}

By plugging (\ref{eq:smoothingInequalities}) into (\ref{eq:t_rate_for_smoothed}) and subtracting $f_{\mu}(\X_{\mu}^*)$ from both sides we get
\begin{equation*}
\begin{split}
\mathbb{E}[f_{\mu}(\X_{s,t+1})]-f_{\mu}(\X_{\mu}^*) & \le \left(1-\eta_t\right)(\mathbb{E}[f_{\mu}(\X_{s,t})]-f_{\mu}(\X_{\mu}^*))+\frac{\sigma_{s,t}^2}{2\beta}+\beta\eta_{t}^{2}\delta_{1}+\eta_t\gamma\mu.
\end{split}
\end{equation*}

Following the proof of \cref{thm:1} with $\delta=\delta_{1}+\frac{\mu\gamma}{\beta\eta_t}$ gives us
\begin{equation} \label{eq:s_rate_smoothed}
\begin{split}
\mathbb{E}[f_{\mu}(\X_{s})]-f_{\mu}(\X_{\mu}^*) & \le C_{0}\left(\frac{1}{2}\right)^{s-1}+\frac{8}{7}\left(\alpha\delta_{1}+2\gamma\mu\right).
\end{split}
\end{equation}

Using the optimality of $\X_{\mu}^*$ and the definition of the smoothing we get,
\begin{equation} \label{eq:smoothingInequalities2}
\begin{split}
\mathbb{E}[f_{\mu}(\X_{s})]-f_{\mu}(\X_{\mu}^*) \ge \mathbb{E}[f_{\mu}(\X_{s})]-f_{\mu}(\X_{\mu}^*) \ge \mathbb{E}[f(\X_{s})]-f(\X_{\mu}^*)-\gamma\mu.
\end{split}
\end{equation}

Combining (\ref{eq:s_rate_smoothed}) and (\ref{eq:smoothingInequalities2}) we obtain
\begin{equation*} 
\begin{split}
\mathbb{E}[f(\X_{s})]-f(\X^*) & \le C_{0}\left(\frac{1}{2}\right)^{s-1}+\frac{8}{7}\alpha\delta_{1}+\frac{23}{7}\gamma\mu.
\end{split}
\end{equation*}

\end{proof}

\begin{corollary} \label{cor:nonsmooth}
Assume that \cref{Ass2} holds. Applying Theorem \ref{thm:nonsmooth} with the parameters $\delta_1=\frac{7\varepsilon}{32\alpha}$ and $\mu=\frac{7\varepsilon}{92\gamma}$, guarantees that the overall number of epochs to reach an $\epsilon$-approximated solution in expectation is bounded by
\begin{equation*}
O\left(\ln{\left(\frac{1}{\varepsilon}\right)}\right),
\end{equation*}
the total number of calls to the $(\delta_1,\delta_2)$-weak proximal oracle is bounded by 
\begin{equation*}
O\left(\frac{\beta}{\alpha}\ln\left(\frac{1}{\varepsilon}\right)\right),
\end{equation*}
and the total number of stochastic gradients sampled is bounded by
\begin{equation*}
O\left(\frac{\sigma^2}{\alpha\varepsilon}+\frac{\beta_G^2\beta}{\alpha^3}\ln{\left(\frac{1}{\varepsilon}\right)}\right).
\end{equation*}

\end{corollary}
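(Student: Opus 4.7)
The plan is to deduce this corollary directly from Theorem~\ref{thm:nonsmooth}, essentially mimicking the argument used to derive Theorem~\ref{thm:stocohasticRates} from Theorem~\ref{thm:1}, but now with an extra additive term coming from the smoothing approximation.

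First I would balance the three error terms in the bound of Theorem~\ref{thm:nonsmooth}. Plugging in $\delta_1 = \tfrac{7\varepsilon}{32\alpha}$ gives $\tfrac{8}{7}\alpha\delta_1 = \tfrac{\varepsilon}{4}$, and plugging in $\mu = \tfrac{7\varepsilon}{92\gamma}$ gives $\tfrac{23}{7}\gamma\mu = \tfrac{\varepsilon}{4}$. Hence the theorem's guarantee simplifies to
\begin{equation*}
\mathbb{E}[f(\X_s)] - f(\X^*) \le C_0\left(\tfrac{1}{2}\right)^{s-1} + \tfrac{\varepsilon}{2}.
\end{equation*}
To push the remaining geometric term below $\varepsilon/2$ it suffices to take $S = \log_2(2C_0/\varepsilon)+1 = O(\ln(1/\varepsilon))$ epochs, which establishes the first bound.

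Next, since each epoch runs $T = \tfrac{8\beta}{3\alpha}\ln 8 + 1 = O(\beta/\alpha)$ inner iterations, and each inner iteration makes one call to the $(\delta_1,\delta_2)$-weak proximal oracle, the overall number of oracle calls is $S\cdot T = O\!\left(\tfrac{\beta}{\alpha}\ln(1/\varepsilon)\right)$, giving the second bound.

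Finally, for the stochastic gradient count I would reuse verbatim the calculation in the proof of Theorem~\ref{thm:stocohasticRates} (equations~(\ref{num_ks})-(\ref{num_ks_kt})): the outer-loop snap-shot samples sum geometrically as
\begin{equation*}
\sum_{s=1}^{S} k_s = \tfrac{32\sigma^2}{\alpha C_0}\bigl(2^S - 1\bigr) = O\!\left(\tfrac{\sigma^2}{\alpha\varepsilon}\right),
\end{equation*}
while the inner-loop stochastic gradients contribute
\begin{equation*}
\sum_{s=1}^{S}\sum_{t=1}^{T-1} k_t = \tfrac{32\beta_G^2}{\alpha^2}\cdot T\cdot S = O\!\left(\tfrac{\beta_G^2\beta}{\alpha^3}\ln(1/\varepsilon)\right),
\end{equation*}
and adding the two yields the third bound. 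The only potential obstacle is an arithmetic one, namely verifying that the chosen $\delta_1$ and $\mu$ indeed split the tolerance budget cleanly as above; all other steps are direct substitutions into the complexity accounting already carried out for the smooth case.
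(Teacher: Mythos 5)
Your proposal is correct and follows essentially the same route as the paper: the paper likewise notes that with these choices of $\delta_1$ and $\mu$ it suffices to run $S \ge \log_2(C_0/\varepsilon)+2$ epochs and then refers back to the gradient-counting calculations in (\ref{num_ks})--(\ref{num_ks_kt}). Your explicit verification that each of the two additive error terms equals $\varepsilon/4$ is the only arithmetic needed, and it checks out.
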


\begin{proof}
By \cref{thm:nonsmooth} it is implied that to achieve an $\varepsilon$-stochastic error $\mathbb{E}[f(\X_S)]-f(\X^*)\le\varepsilon$ we need to compute
\begin{equation*}
\begin{split}
S & \ge \log_2{\left(\frac{C_0}{\varepsilon}\right)}+2
\end{split}
\end{equation*}
iterations.

The rest follows from the calculations brought in (\ref{num_ks}),(\ref{num_kt}),(\ref{num_ks_kt}).
\end{proof}

\subsection{Specific examples}

We now discuss several applications of Corollary \ref{cor:nonsmooth} to specific problems.

\subsubsection{Example 1: Low-rank and sparse matrix estimation}

As discussed in the introduction, this work is largely motivated by matrix recovery problems, such as low-rank and sparse matrix estimation. In order to show the application of our algorithm for this matrix estimation problem, we state a corresponding optimization problem:
\begin{equation} \label{eq:matrixEstimation}
\min_{\Vert \X\Vert_{*}\le\tau}{\frac{1}{2}\Vert \X-\mathbb{E}_{\M\sim \mathcal{D}}[\M]\Vert_{F}^{2}+\lambda\Vert \X\Vert_{1}},
\end{equation}
where $\mathcal{D}$ is an unknown distribution over instances.


For problem (\ref{eq:matrixEstimation}) to fit the Model \eqref{eq:generalModel_nonsmooth}, we take
\begin{equation} \label{eq:G(x)}
G(\X)=\mathbb{E}_{(\M,\N)\sim \mathcal{D}\times\mathcal{D}}\left[\frac{1}{2}\langle{\X-\M,\X-\N}\rangle\right].
\end{equation}

Since $\M$ and $\N$ are i.i.d, this is equivalent to
\begin{equation*}
\begin{split}
G(\X) & = \frac{1}{2}\langle{\mathbb{E}_{\M\sim\mathcal{D}}[\X-\M],\mathbb{E}_{\N\sim\mathcal{D}}[\X-\N]}\rangle
\\ & = \frac{1}{2}\langle{\X-\mathbb{E}_{\M\sim \mathcal{D}}[\M],\X-\mathbb{E}_{\M\sim \mathcal{D}}[\M]}\rangle
\\ & = \frac{1}{2}\Vert \X-\mathbb{E}_{\M\sim \mathcal{D}}[\M]\Vert_{F}^{2}.
\end{split}
\end{equation*}

It should be noted that for this function $G(\X)$, the stochastic gradients are of the form $\nabla g^{(i)}(\X)=\X-\M$ for some $\M\sim\mathcal{D}$. As a result, for a fixed epoch $s$ and iteration $t$, we have
\[ \hat{\nabla}g(\X_{s,t})=\frac{1}{k_t}\sum_{i=1}^{k_t}\left(\X_{s,t}-\X_{s}+\tilde{\nabla}g(\X_s)\right). \]
As can be seen, $\hat{\nabla}g(\X_{s,t})$ is independent of the stochastic samples within the inner-loop (since they cancel-out), and therefore we can simple set $k_t=0$.

Smoothing the $\ell_1$-norm has a well known solution, as shown in \cite{smoothing}. The $\mu$-smooth approximation of $\Vert \X\Vert_{1}$ is 
\[ R_{\mu}(\X)=\sum_{j=1}^{d}\sum_{i=1}^{m}H_{\mu}( \X_{ij}), \]
with parameters $(1,\frac{md}{2},0)$, where $H_{\mu}(t)$ is the one dimensional Huber function, defined as:
\begin{equation*}
\begin{split}
H_{\mu}(t) & =\begin{cases}
\frac{t^{2}}{2\mu}, & |t|\le\mu\\
|t|-\frac{\mu}{2}, & |t|>\mu
\end{cases}.
\end{split}
\end{equation*}
This satisfies
\begin{equation} \label{eq:l1HuberBound}
\begin{split} 
R_{\mu}(\X) \le \Vert \X\Vert_{1} \le R_{\mu}(\X)+\frac{md\mu}{2}.
\end{split}
\end{equation}

$h(\X)$ is to be taken to be the indicator over the nuclear norm ball, i.e., $h(\X)=\chi_\mathcal{C}$, where $\mathcal{C}=\{\X\in\mathbb{R}^{m\times d}:\ \Vert \X\Vert_{*}\le\tau\}$.



\begin{corollary} \label{cor:genEx1}
Consider running Algorithm \ref{alg:stoch} for the smooth approximation of Problem (\ref{eq:matrixEstimation}), with parameters $T=\frac{8\ln{8}}{3}\left(\frac{\lambda}{\mu}+1\right)+1$, $\eta_t=\frac{\mu}{2\lambda+2\mu}$, $k_s=\frac{32\mathbb{E}[\Vert M-\mathbb{E}[M]\Vert^{2}]}{C_{0}}2^{s-1}$ and $k_t=32$. Let $\X^*$ denote the optimal solution Problem \eqref{eq:matrixEstimation}. Then, running $S \ge \log_2{\left(\frac{C_0}{\varepsilon}\right)}+2$ epochs of the outer-loop guarantees that:
\begin{equation} \label{eq:MEsmoothing}
\begin{split}
\mathbb{E}[f(\X_{S})]-f(\X^*) & \le \frac{\varepsilon}{4}+\frac{8}{7}\delta_{1}+\frac{23md}{14}\mu.
\end{split}
\end{equation}
In particular, taking a smoothing parameter of $\mu=\frac{7\varepsilon}{46md}$ and $\delta_1=\frac{7\varepsilon}{32}$, we obtain
\begin{equation} \label{eq:MEsmoothing_particular}
\begin{split}
\mathbb{E}[f(\X_{S})]-f(\X^*) & \le \varepsilon.
\end{split}
\end{equation}
\end{corollary}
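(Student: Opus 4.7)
The plan is to recognize Corollary~\ref{cor:genEx1} as a direct specialization of Theorem~\ref{thm:nonsmooth} to the particular choice of $(G,R,h)$ dictated by Problem~(\ref{eq:matrixEstimation}), and then to calibrate $\mu$ and $\delta_1$ so that the smoothing and oracle-error contributions are each absorbed into $\varepsilon$. The work reduces to reading off the problem-specific constants $\alpha, \beta_G, \beta_R, \sigma, \gamma$ and substituting them into the generic bound.

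First I would verify Assumption~\ref{Ass2} and identify the constants. Writing $G$ as in~(\ref{eq:G(x)}) with paired independent samples $(\M,\N)$, each $g(\X)=\tfrac{1}{2}\langle \X-\M, \X-\N\rangle$ is a quadratic in $\X$ with Hessian $\Id$, so $\beta_G = 1$; the mean $G(\X)=\tfrac{1}{2}\Vert \X-\E[\M]\Vert_{F}^{2}$ is itself $1$-strongly convex, hence we may take $\alpha = 1$. The identity $\nabla g(\X)-\nabla G(\X) = \E[\M] - \tfrac{1}{2}(\M+\N)$ with $\M,\N$ iid gives $\E[\Vert\nabla g-\nabla G\Vert^{2}] \le \E[\Vert \M-\E[\M]\Vert^{2}]$, so $\sigma^{2} = \E[\Vert \M-\E[\M]\Vert^{2}]$ is admissible. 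Huber smoothing yields $\beta_R = \lambda/\mu$, and via (\ref{eq:l1HuberBound}) the smoothing constant $\gamma = md/2$. Finally $h = \chi_{\mathcal{C}}$ for the nuclear-norm ball is proper, lsc, and convex. Substituting $\alpha = 1$ and $\beta = 1 + \lambda/\mu = (\mu+\lambda)/\mu$ into the parameter recipe of Theorem~\ref{thm:nonsmooth}, the prescriptions $T=\frac{8\beta}{3\alpha}\ln 8 + 1$, $\eta_t=\alpha/(2\beta)$, $k_t=32\beta_G^{2}/\alpha^{2}$, and $k_s=32\sigma^{2}/(\alpha C_0)\cdot 2^{s-1}$ collapse to exactly the values stated in the corollary.

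Plugging these choices into Theorem~\ref{thm:nonsmooth} gives, for every $s\ge 1$,
\[
\E[f(\X_s)] - f(\X^*) \;\le\; C_0\left(\tfrac{1}{2}\right)^{s-1} + \tfrac{8}{7}\delta_1 + \tfrac{23}{7}\gamma\mu \;=\; C_0\left(\tfrac{1}{2}\right)^{s-1} + \tfrac{8}{7}\delta_1 + \tfrac{23 md}{14}\mu.
\]
Taking $S \ge \log_{2}(C_0/\varepsilon)+2$ makes the leading term at most $\varepsilon/4$ (up to a one-step offset in indexing), which delivers~(\ref{eq:MEsmoothing}). For the refined statement~(\ref{eq:MEsmoothing_particular}) I would calibrate $\delta_1 = 7\varepsilon/32$ so that $\tfrac{8}{7}\delta_1 = \varepsilon/4$, and $\mu = 7\varepsilon/(46 md)$ so that $\tfrac{23md}{14}\mu = \varepsilon/4$; the three terms then sum to at most $\varepsilon$.

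I do not anticipate any conceptual obstacle: everything has already been done in Theorem~\ref{thm:nonsmooth}, and the corollary is essentially a substitution exercise. The only minor bookkeeping subtlety is the placement of the $\lambda$ prefactor in the Huber smoothing---whether one identifies $\gamma$ at the unscaled $\ell_1$ level (as the stated form $\tfrac{23 md}{14}\mu$ suggests) or absorbs $\lambda$ into $\gamma$. Either convention produces the same qualitative conclusion, and adjusting $\mu$ by a constant factor accommodates the other.
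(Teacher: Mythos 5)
Your proposal is correct and follows essentially the same route as the paper: read off $\alpha=1$, $\beta_G=1$, $\sigma^2=\mathbb{E}[\Vert \M-\mathbb{E}[\M]\Vert^2]$, $\beta_R=\lambda/\mu$, $\gamma=md/2$, substitute into Theorem \ref{thm:nonsmooth}, and calibrate $\mu$ and $\delta_1$ so each error term is at most $\varepsilon/4$. The bookkeeping subtlety you flag about whether $\lambda$ should multiply $\gamma$ is real, but the paper adopts the same convention ($\gamma=md/2$) that you do, so your argument matches the paper's.
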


\begin{proof}
The parameters of the problem are as follows:
$\alpha=1$, $\beta_G=1$, $\gamma=\frac{md}{2}$, $\beta_R=\frac{\lambda}{\mu}$, $\sigma^2=\mathbb{E}[\Vert M-\mathbb{E}[M]\Vert^{2}]$.
Therefore, by \cref{thm:nonsmooth} we get the result in (\ref{eq:MEsmoothing}).
By choosing $\mu=\frac{7\varepsilon}{46md}$ and $\delta_1=\frac{7\varepsilon}{32}$, the result in (\ref{eq:MEsmoothing_particular}) is immediate. 
\end{proof}

\subsubsection{Example 2: Linearly constrained low-rank matrix estimation}

Another example, is the problem of recovering a low-rank matrix subject to linear constraints, which can be written in penalized form as:
\begin{equation} \label{eq:linearMax}
\min_{\Vert \X\Vert_{*}\le\tau}{\frac{1}{2}\Vert \X-\mathbb{E}_{\M\sim \mathcal{D}}[\M]\Vert_{F}^{2}+\max_{i\in[n]}\left(\left<\A_{i},\X\right>-b_{i}\right)},
\end{equation}
where $\mathcal{D}$ is again an unknown distribution over instances. Here the matrices $\{\A_i\}_{i\in[n]}$ and scalars $\{b_i\}_{i\in[n]}$ can absorb a penalty factor $\lambda$.

Here, by \cite{smoothing}, the $\mu$-smooth approximation of $\max_{i\in[n]}\left(\left<\A_{i},\X\right>-b_{i}\right)$ is 
\[R_{\mu}(\X)=\mu\log\left(\sum_{i=1}^{n}e^{\frac{1}{\mu}\left(\left<\A_{i},\X\right>-b_{i}\right)}\right) ,\] 
with parameters $(\Vert\mathcal{A}\Vert^{2},\log{n},0)$, where 
$\mathcal{A}:\mathbb{R}^{m\times d}\rightarrow\mathbb{R}^{n}$ is a linear transformation with the form
$\mathcal{A}(\X)=\left(\begin{array}{c}
tr(\A_{1}^{T}\X),\ 
tr(\A_{2}^{T}\X),\ 
\ldots,\ 
tr(\A_{n}^{T}\X) 
\end{array}\right)^{\top}$, for $\A_{1},...,\A_{n}\in\mathbb{R}^{m\times d}$, and
$\Vert\mathcal{A}\Vert=\max\{\Vert\mathcal{A}(\X)\Vert_{2}:\ \Vert\X\Vert_{F}=1\}$.
This satisfies
\begin{equation} \label{eq:linearBound}
\begin{split}
R_{\mu}(\X) \le \max_{i\in[n]}\left(\left<\A_{i},\X\right>-b_{i}\right) \le R_{\mu}(\X)+\mu\log{n}.
\end{split}
\end{equation}

In this case, $G(\X)$ and $h(\X)$ are as in Example $1$.

\begin{corollary} \label{cor:genEx2}
Consider running Algorithm \ref{alg:stoch} for the smooth approximation of Problem \eqref{eq:linearMax}, with parameters $T=\frac{8\ln{8}}{3}\left(\frac{\Vert\mathcal{A}\Vert^2}{\mu}+1\right)+1$, $\eta_t=\frac{\mu}{2\Vert\mathcal{A}\Vert^2+2\mu}$, $k_s=\frac{32\mathbb{E}[\Vert M-\mathbb{E}[M]\Vert^{2}]}{C_{0}}2^{s-1}$, $k_t=32$, and let $\X^*$ denote the optimal solution to Problem \eqref{eq:linearMax}. Then, running $S \ge \log_2{\left(\frac{C_0}{\varepsilon}\right)}+2$ epochs of the outer-loop guarantees that:
\begin{equation} \label{eq:AMEsmoothing}
\begin{split}
\mathbb{E}[f(\X_{S})]-f(\X^*) & \le \frac{\varepsilon}{4}+\frac{8}{7}\delta_{1}+\frac{23\log{n}}{7}\mu.
\end{split}
\end{equation}
In particular, taking a smoothing parameter of $\mu=\frac{7\varepsilon}{92\log{n}}$ and $\delta_1=\frac{7\varepsilon}{32}$, we obtain
\begin{equation} \label{eq:AMEsmoothing_particular}
\begin{split}
\mathbb{E}[f(\X_{S})]-f(\X^*) & \le \varepsilon.
\end{split}
\end{equation}
\end{corollary}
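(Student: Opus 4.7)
The plan is to mimic the proof of Corollary~\ref{cor:genEx1} almost verbatim, since Problem~\eqref{eq:linearMax} fits Model~\eqref{eq:generalModel_nonsmooth} in the same way as Problem~\eqref{eq:matrixEstimation} does, differing only in the identity of the non-smooth term $R(\X)$ and its smoothing constants.

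First I would identify the parameters of Model~\eqref{eq:generalModel_nonsmooth} for the instance~\eqref{eq:linearMax}. As in Example~1, the stochastic part $G(\X)$ is defined via two i.i.d.\ samples $\M,\N\sim\mD$ as $G(\X)=\tfrac12\E_{(\M,\N)}[\langle\X-\M,\X-\N\rangle]=\tfrac12\Vert\X-\E_{\M}[\M]\Vert_F^2$, giving $\alpha=1$ and $\beta_G=1$, while the stochastic-variance bound gives $\sigma^2=\E[\Vert\M-\E[\M]\Vert_F^2]$. The function $h(\X)=\chi_{\{\Vert\X\Vert_*\le\tau\}}$ is unchanged. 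The only genuinely new ingredient is the non-smooth term $R(\X)=\max_{i\in[n]}(\langle\A_i,\X\rangle-b_i)$, whose $\mu$-smooth approximation from~\cite{smoothing} (the log-sum-exp) has smoothing parameters $(\theta,\gamma,K)=(\Vert\mathcal{A}\Vert^2,\log n,0)$, so that its gradient is $\beta_R=\Vert\mathcal{A}\Vert^2/\mu$-Lipschitz and satisfies~\eqref{eq:linearBound}.

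Next I would plug these parameters into Theorem~\ref{thm:nonsmooth}. The parameter choices in the corollary statement are exactly the specializations demanded by that theorem: the total smoothness is $\beta=\beta_G+\beta_R=1+\Vert\mathcal{A}\Vert^2/\mu$, so that $T=\tfrac{8\beta}{3\alpha}\ln 8+1=\tfrac{8\ln 8}{3}(\Vert\mathcal{A}\Vert^2/\mu+1)+1$, $\eta_t=\tfrac{\alpha}{2\beta}=\tfrac{\mu}{2\Vert\mathcal{A}\Vert^2+2\mu}$, $k_s=\tfrac{32\sigma^2}{\alpha C_0}2^{s-1}$, and $k_t=\tfrac{32\beta_G^2}{\alpha^2}=32$. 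Substituting $\gamma=\log n$ into the bound of Theorem~\ref{thm:nonsmooth} and using $S\ge\log_2(C_0/\varepsilon)+2$ to absorb the geometric term into $\varepsilon/4$ yields inequality~\eqref{eq:AMEsmoothing}.

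Finally, the specialization~\eqref{eq:AMEsmoothing_particular} follows by an elementary arithmetic check: with $\mu=\tfrac{7\varepsilon}{92\log n}$ the term $\tfrac{23\log n}{7}\mu$ equals $\varepsilon/4$, and with $\delta_1=\tfrac{7\varepsilon}{32}$ the term $\tfrac{8}{7}\delta_1$ equals $\varepsilon/4$, so the three $\varepsilon/4$ contributions sum to $3\varepsilon/4\le\varepsilon$ once the leading $\varepsilon/4$ from~\eqref{eq:AMEsmoothing} is added. The proof is essentially mechanical, and I do not anticipate any real obstacle; the only ``work'' is bookkeeping of smoothing constants from~\cite{smoothing}, which is already stated verbatim for the log-sum-exp approximation in the discussion preceding the corollary.
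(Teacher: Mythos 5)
Your proposal matches the paper's proof essentially verbatim: identify the model parameters $\alpha=1$, $\beta_G=1$, $\gamma=\log n$, $\beta_R=\Vert\mathcal{A}\Vert^2/\mu$, $\sigma^2=\E[\Vert\M-\E[\M]\Vert^2]$, invoke Theorem~\ref{thm:nonsmooth}, and finish with the arithmetic check for the particular choices of $\mu$ and $\delta_1$. This is exactly the route the paper takes, and your bookkeeping of the log-sum-exp smoothing constants is correct.
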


\begin{proof}
The parameters of the problem are as follows:
$\alpha=1$, $\beta_G=1$, $\gamma=\log{n}$, $\beta_R=\frac{\Vert\mathcal{A}\Vert^2}{\mu}$, $\sigma^2=\mathbb{E}[\Vert M-\mathbb{E}[M]\Vert^{2}]$.
Therefore, by \cref{thm:nonsmooth} we get the result in (\ref{eq:AMEsmoothing}). By choosing $\mu=\frac{7\varepsilon}{92\log{n}}$ and $\delta_1=\frac{7\varepsilon}{32}$, the result in (\ref{eq:AMEsmoothing_particular}) is immediate.
\end{proof}

\subsubsection{Example 3: Recovering a low-rank and sparse matrix from linear measurements with elastic-net regularization}

Finally, we very briefly discuss a matrix-sensing problem, where both a nuclear-norm constraint is used to promote low-rank solutions and the well known elastic-net regularizer \cite{zou2005regularization} is used to promote sparsity.
\begin{equation} \label{eq:elasticNet}
\min_{\Vert \X\Vert_{*}\le\tau}\mathbb{E}_{(\A,b)\sim \mathcal{D}}\left[{\frac{1}{2}\left({\langle{\A,\X}\rangle-b}\right)^2}\right]+\lambda_1\Vert{\X}\Vert_1 + \lambda_2\Vert{\X}\Vert_F^2.
\end{equation}
In this example, $G(\X) = \mathbb{E}_{(\A,b)\sim \mathcal{D}}\left[{\frac{1}{2}\left({\langle{\A,\X}\rangle-b}\right)^2}\right]$ need not be strongly convex as in previous examples, however the elastic-net regularizer $R(\X) := \lambda_1\Vert{\X}\Vert_1 + \lambda_2\Vert{\X}\Vert_F^2$ is strongly convex.

The smoothing of Problem \eqref{eq:elasticNet} and resulting application of our method goes along the same lines as our treatment of Problem \eqref{eq:matrixEstimation}.

\section{Experiments}\label{sec:experiments}

In support of our theory, in this section we present preliminary empirical experiments on the problem of \textit{low-rank and sparse matrix estimation}, Problem \eqref{eq:matrixEstimation}. We compare our Algorithm \ref{alg:stoch} (SVRGCG) to previous conditional gradient-type stochastic methods including the Stochastic Conditional Gradient Algorithm (SCG) \cite{Hazan_finiteSum}\footnote{In \cite{Hazan_finiteSum} it appears as Stochastic Frank-Wolfe (SFW).} and the Stochastic Conditional Gradient Sliding Algorithm (SCGS) \cite{Lan}.

We use synthetic randomly-generated data for the experiments. For all experiments the input matrix is of the form $\M_0 = \E_{\M\sim\mD}[\M] = \Y\Y^{\top} + \N$, where $\Y\in\reals^{d\times r}$ is a random sparse matrix for which each entry is zero w.p. $1-1/\sqrt{d}$ and $U\{1,\dots,10\}$ w.p. $1/\sqrt{d}$, $\N$ is a $d\times d$ random matrix with i.i.d. standard Gaussian entries. We set the dimension to $d=300$ and the rank of $\Y$, $r$ to either $1$ or $10$.
In all experiments we set $\lambda=2$, $\varepsilon=0.01\cdot\Vert{\Y\Y^{\top}}\Vert_F^2$ (i.e., the approximation error is relative to magnitude of signal), $\mu=\varepsilon/d^2$ (in accordance with Corollary \ref{cor:genEx1}) ,  and $\tau=\trace(\Y\Y^{\top})$. The stochastic oracle is implemented by taking noisy observations of $\M_0$ using: $\M^{(i)} = \M_0 + \sigma\Q^{(i)}$, where each $\Q^{(i)}$ is random with i.i.d. standard Gaussian entries and we fix $\sigma = 5$.

For all three methods we measure i) the obtained (original non-smooth) function value (see \eqref{eq:matrixEstimation}) vs. number of stochastic gradients used, ii) function value vs. overall runtime (seconds), and iii) function value vs. overall number of rank-one SVD computations used. Since the overall running time is highly dependent on specific implementation, we bring the number of rank-one SVD computations as an implementation-independent proxy for the overall runtime. For our method SVRGCG, we compute the overall number of rank-one SVD computations by multiplying the number of SVD factorizations with the rank of the factorization used\footnote{This is reasonable since the runtime for low-rank SVD typically scales linearly with rank.}. In the first experiment (Figure \ref{fig:1}) we set $\rank(\Y)=1$ in which case, all three algorithms use only rank-one SVD computations. In a second experiment (Figure \ref{fig:2}), we set $\rank(\Y)=10$ in which case, algorithms SCG, SCGS still use only rank-one SVD computations, whereas our algorithm SVRGCG uses rank-$10$ SVD computations (hence the left panel in Figure \ref{fig:2} counts 10 times the number of thin-SVD computations used by our algorithm). The results for each experiment are averages of 30 i.i.d. runs.

Importantly, all three algorithms were implemented as suggested by theory without attempts to optimize their performance, with only two exceptions. First, in our algorithm SVRGCG we use the rank of $\Y$ to set the rank of SVD computations (since naturally $\Y\Y^{\top}$ should be close to the optimal solution). Second,  in \cite{Lan} it is suggested to run the conditional gradient method in order to solve proximal sub-problems in algorithm SCGS until a certain quantity, which serves as a certificate for the quality of the solution is reached. However, we observe that in practice, obtaining this certificate takes unreasonable amount of iterations which renders the overall method highly suboptimal w.r.t. the alternatives. Hence in our implementation we limit the number of CG inner iterations to the dimension $d$.

The results are presented in Figures \ref{fig:1} and \ref{fig:2}. It can be seen that our algorithm SVRGCG clearly outperforms both SCG and SCGS with respect to all three measures in the two experiments.


\begin{figure}[H]
  \begin{subfigure}[t]{0.32\textwidth}
    \centering
    \includegraphics[width=\textwidth]{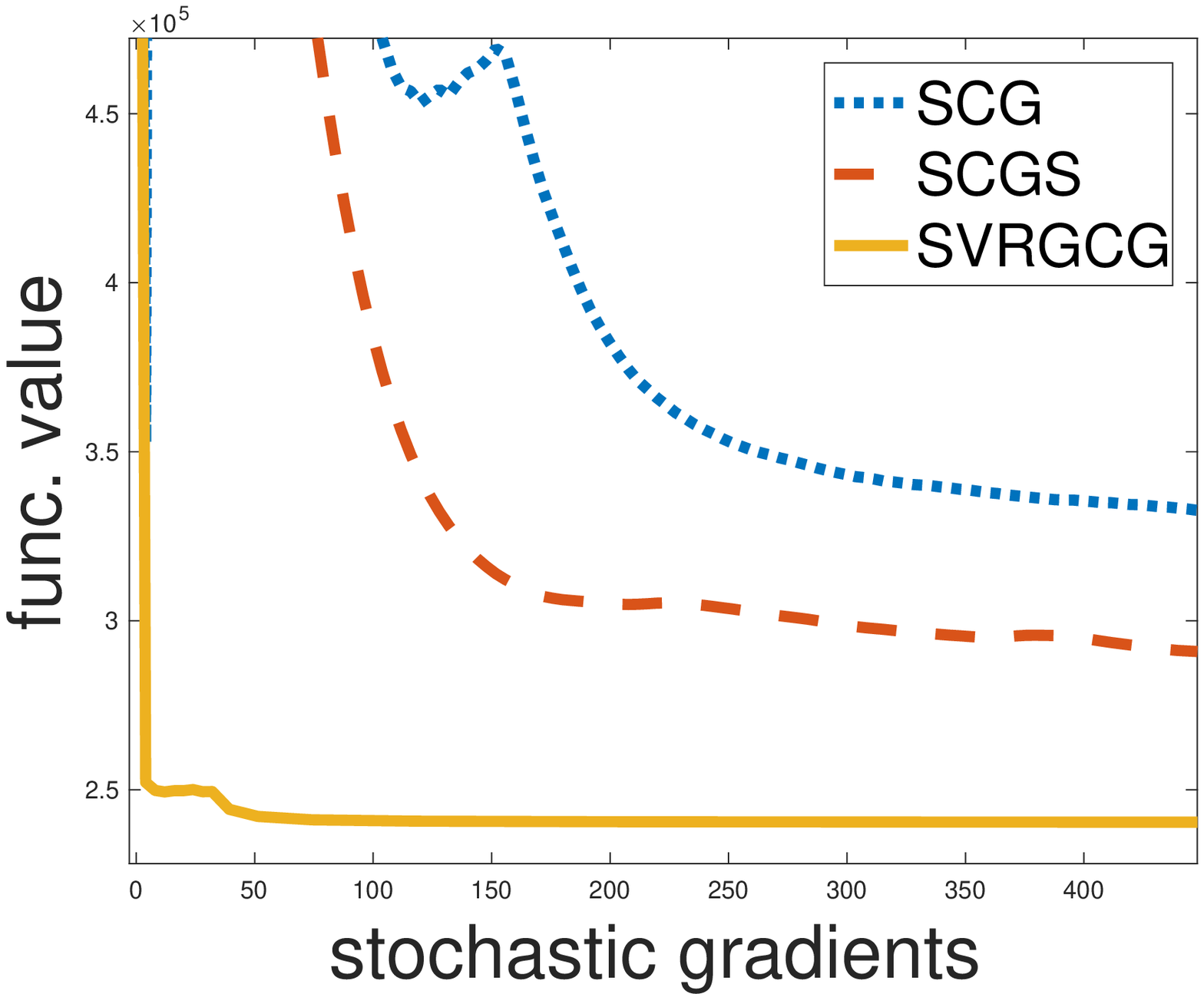}
  \end{subfigure}
  ~
  \begin{subfigure}[t]{0.32\textwidth}
    \centering
    \includegraphics[width=\textwidth]{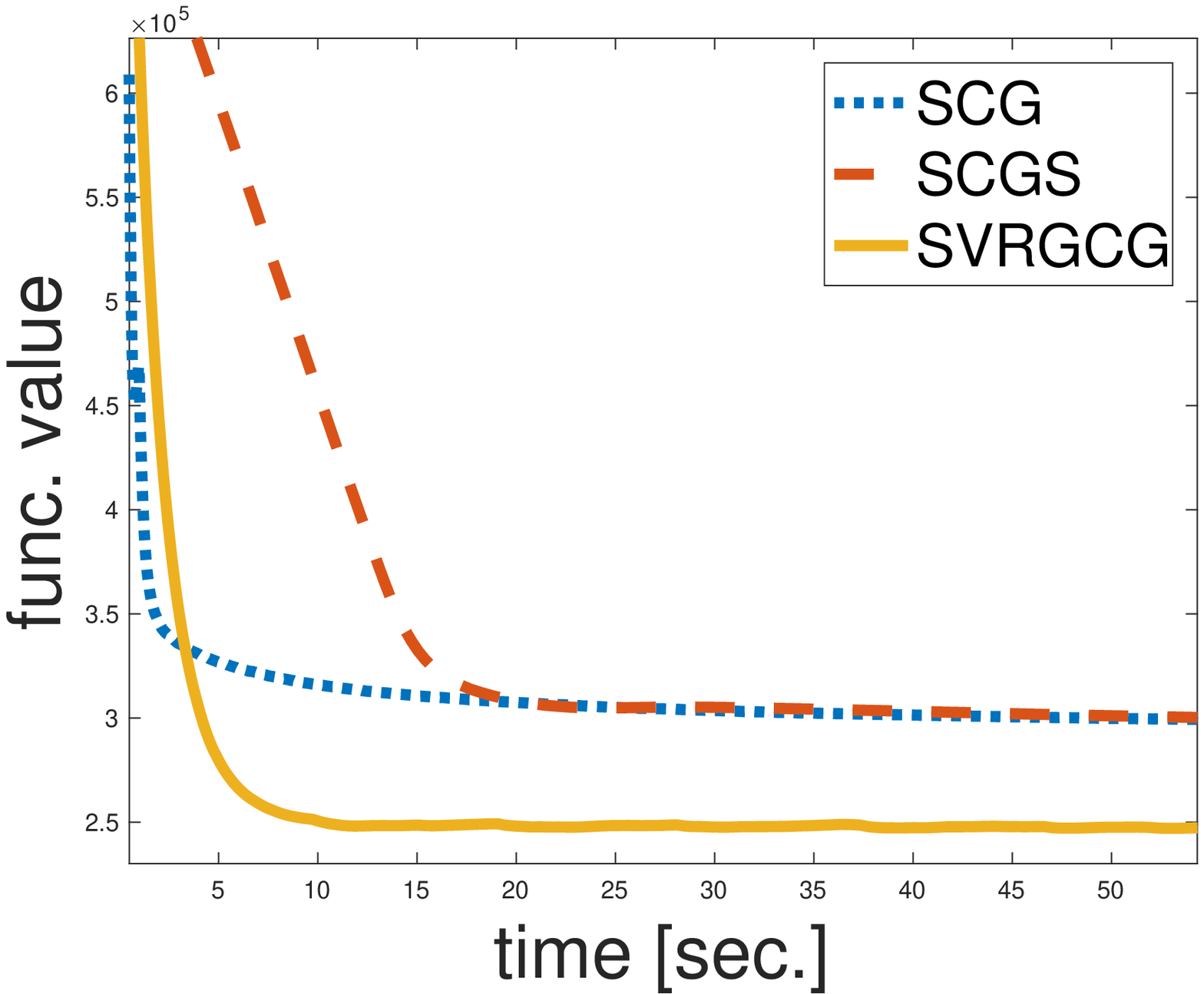}
  \end{subfigure}
  ~
  \begin{subfigure}[t]{0.32\textwidth}
    \centering
    \includegraphics[width=\textwidth]{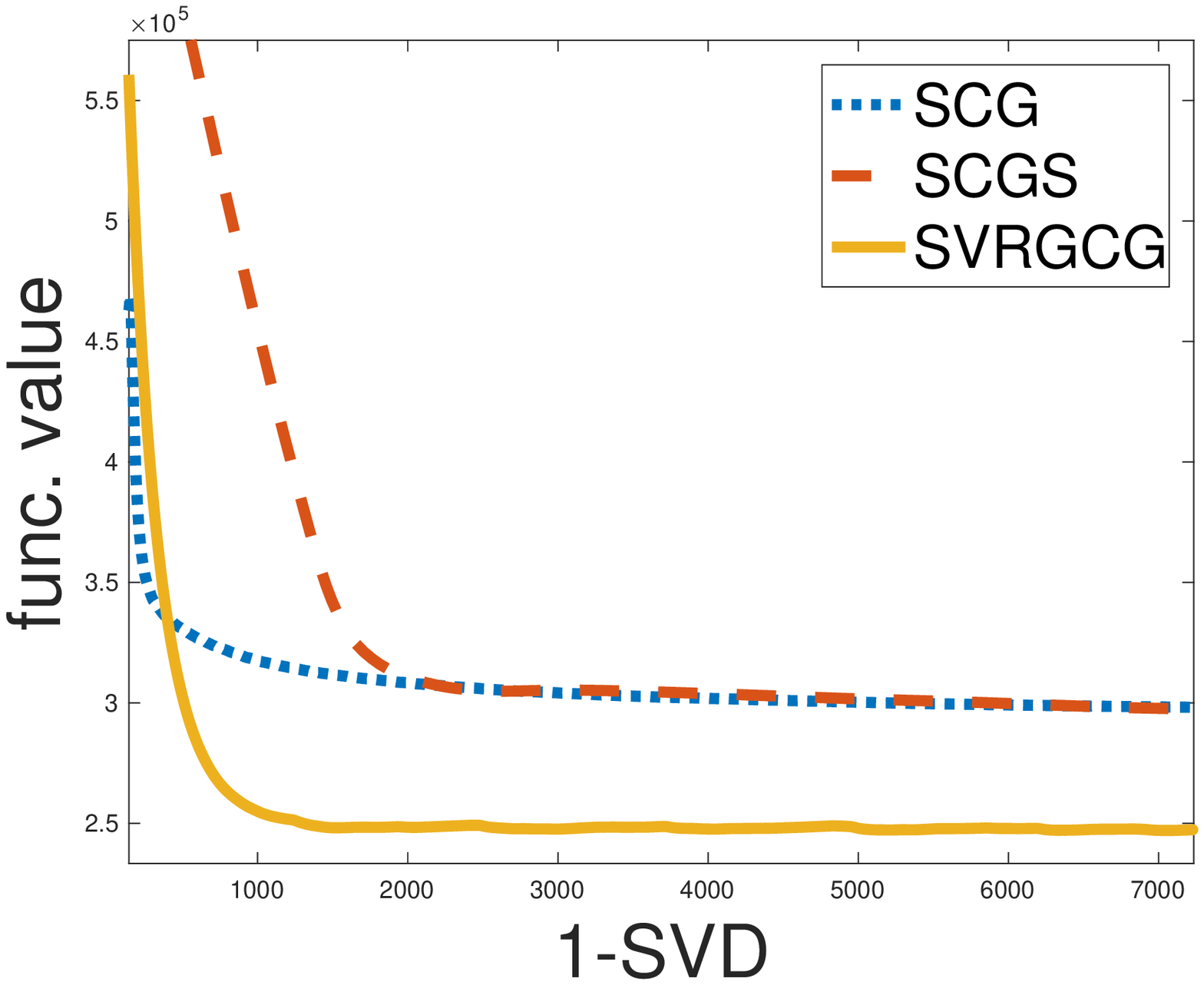}    
    \label{fig:1}
  \end{subfigure}
  \caption{Comparison between methods with $\rank(\Y\Y^{\top})=1.$}
  \label{fig:1}
\end{figure}

\begin{figure}[H]
  \begin{subfigure}[t]{0.32\textwidth}
    \centering
    \includegraphics[width=\textwidth]{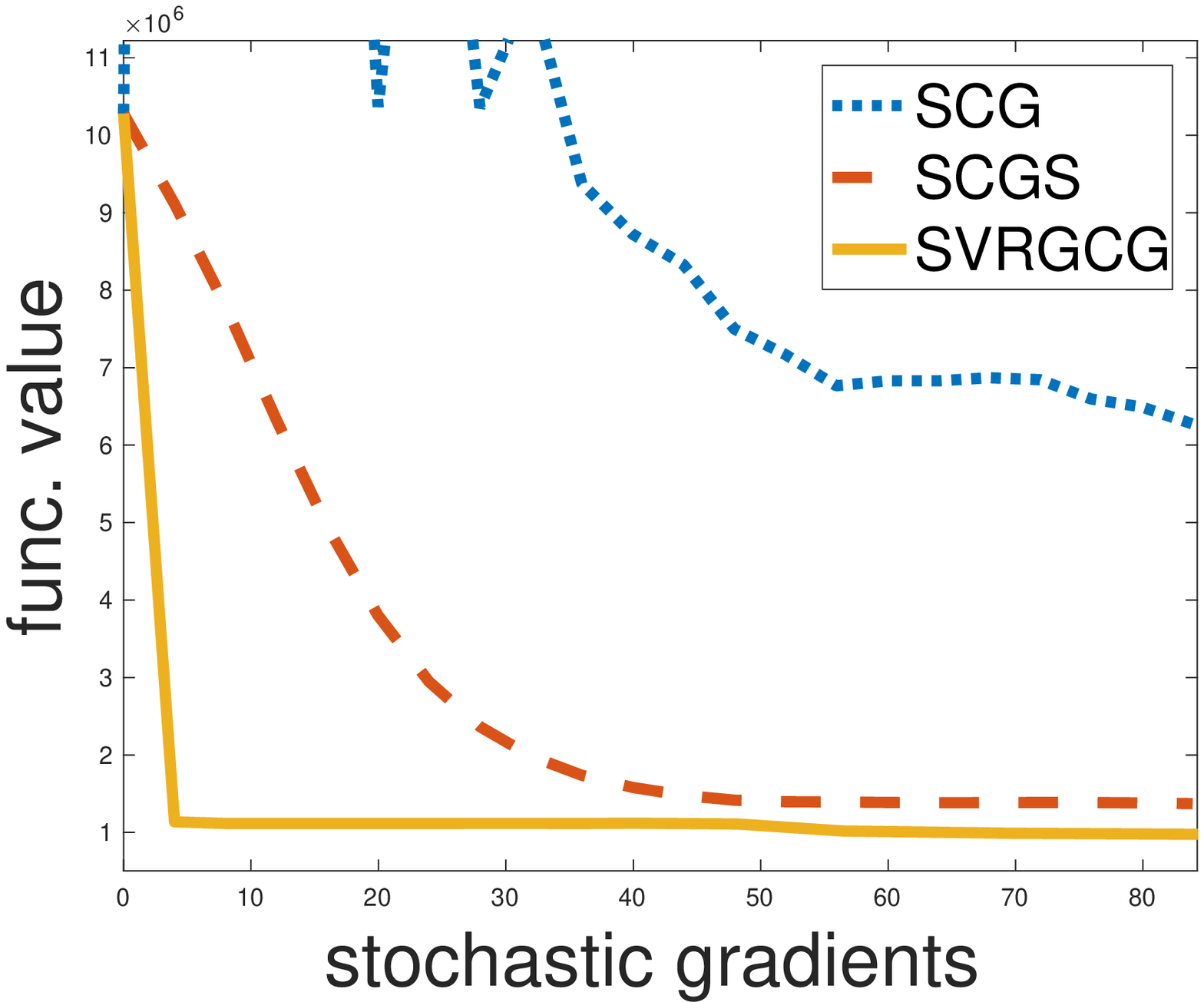}
  \end{subfigure}
  ~
  \begin{subfigure}[t]{0.32\textwidth}
    \centering
    \includegraphics[width=\textwidth]{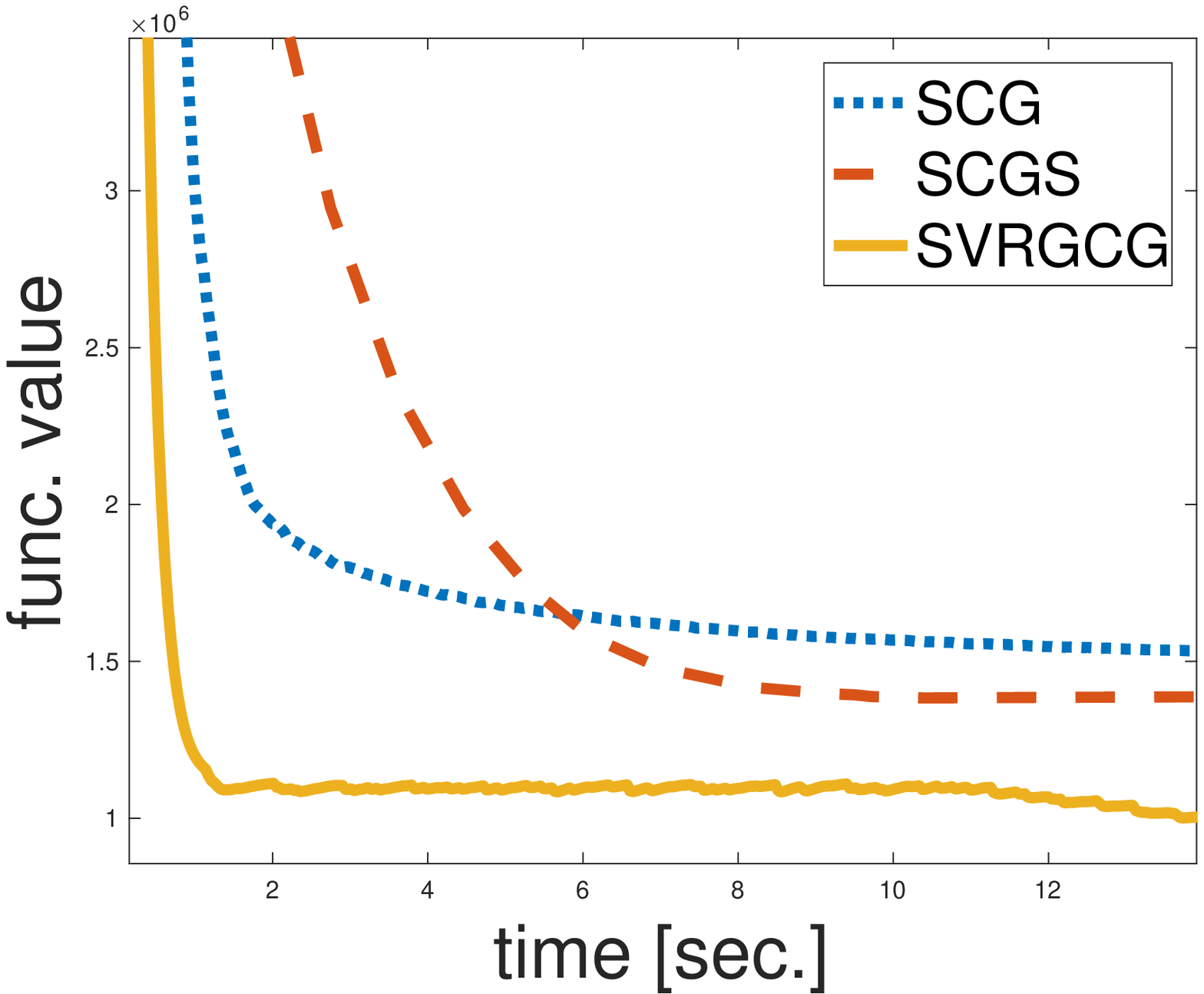}
  \end{subfigure}
  ~
  \begin{subfigure}[t]{0.32\textwidth}
    \centering
    \includegraphics[width=\textwidth]{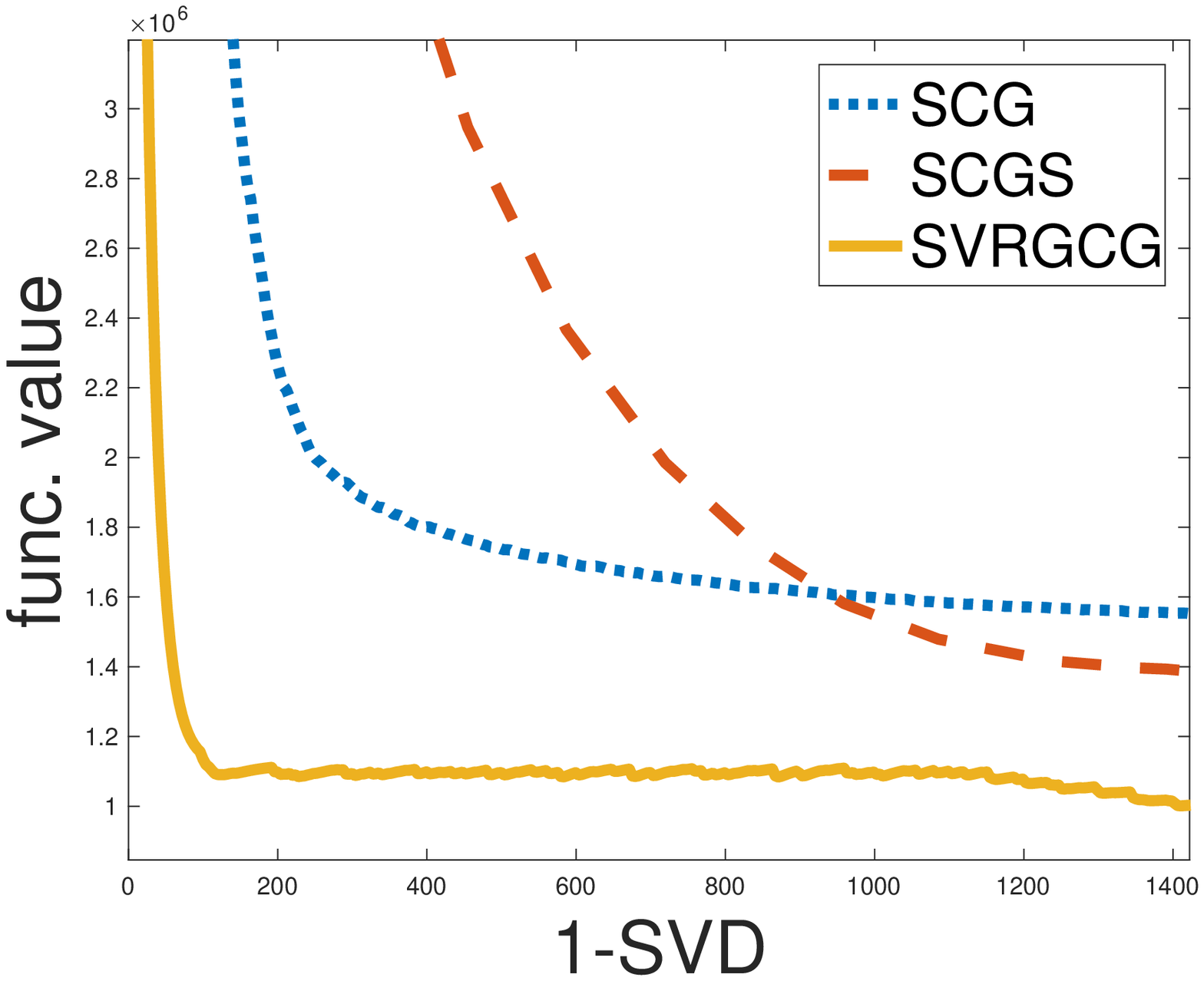}    
  \end{subfigure}
  \caption{Comparison between methods with $\rank(\Y\Y^{\top})=10.$}
  \label{fig:2}
\end{figure}

\paragraph{Acknowledgments:} We would like to thank Shoham Sabach for many fruitful discussions throughout the preparation of this manuscript.

\bibliographystyle{plain}
\bibliography{bibli}

\end{document}